\documentclass[journal]{IEEEtran}
\usepackage{amsmath,amsfonts}
\usepackage{algorithmic}
\usepackage{algorithm}
\usepackage{array}
\usepackage[caption=false,font=scriptsize,labelfont=sf,textfont=sf]{subfig}
\usepackage{textcomp}
\usepackage{stfloats}
\usepackage{url}
\usepackage{verbatim}
\usepackage{graphicx}
\usepackage{cite}
\hyphenation{op-tical net-works semi-conduc-tor IEEE-Xplore}


\usepackage{amsthm}
\newtheorem{theorem}{Theorem}
\newtheorem{lemma}{Lemma}
\newtheorem{proposition}{Proposition}
\newtheorem*{remark}{Remark}

\usepackage{enumerate}
\usepackage{multirow}
\usepackage{color} 
\newcommand{\highlight}[1]{\color{black}{#1}} 
\newcommand{\unhighlight}[0]{\color{black}{}}

\begin{document}
\title{High-order regularization dealing with ill-conditioned robot localization problems}
\author{Xinghua Liu, Ming Cao

\thanks{The work was supported in part by the Netherlands Organization for Scientific Research (NWO-Vici-19902), and the China Scholarship Council.}
\thanks{The authors are with Engineering and Technology Institute (ENTEG), the University of Groningen, 9747 AG Groningen, the Netherlands (e-mail: \textit{\{xinghua.liu,  m.cao\}}@rug.nl).}

\thanks{© 2025 IEEE.  Personal use of this material is permitted.  Permission from IEEE must be obtained for all other uses, in any current or future media, including reprinting/republishing this material for advertising or promotional purposes, creating new collective works, for resale or redistribution to servers or lists, or reuse of any copyrighted component of this work in other works.}

}

\markboth{Accepted by IEEE Transactions on Robotics}%
{Shell \MakeLowercase{\textit{et al.}}: A Sample Article Using IEEEtran.cls for IEEE Journals}


\maketitle

\begin{abstract}
In this work, we propose a high-order regularization method to solve the ill-conditioned problems in robot localization. Numerical solutions to robot localization problems are often unstable when the problems are ill-conditioned. A typical way to solve ill-conditioned problems is regularization, and a classical regularization method is the Tikhonov regularization. It is shown that the Tikhonov regularization is a low-order case of our method. We find that the proposed method is superior to the Tikhonov regularization in approximating some ill-conditioned inverse problems, such as some basic robot localization problems. The proposed method overcomes the over-smoothing problem in the Tikhonov regularization as it uses more than one term in the approximation of the matrix inverse, and an explanation for the over-smoothing of the Tikhonov regularization is given. Moreover, one \emph{a priori} criterion which improves the numerical stability of the ill-conditioned problem is proposed to obtain an optimal regularization matrix. As most of the regularization solutions are biased, we also provide two bias-correction techniques for the proposed high-order regularization. The simulation and experimental results using an Ultra-Wideband sensor network in a 3D environment are discussed, demonstrating the performance of the proposed method.
\end{abstract}

\begin{IEEEkeywords}
Robot localization, ill-conditioned problem, high-order regularization, approximate solution, \emph{a priori} criterion for numerical stability.
\end{IEEEkeywords}

\section{Introduction}
\IEEEPARstart{T}{}he growing usage of mobile robots, e.g., indoor parking lots, cargo automation management in automatic factories, autonomous vehicles, and fire rescue \cite{couturier_review_2021}, gives rise to the high demand of localization techniques, especially under limited or constrained sensing capabilities. A challenging problem in robot localization is to deal with the \emph{ill-conditioned} situations\cite{bertoni2022perceiving}, in which small changes in the measurements lead to huge variations in the location solutions \cite{li2017three}. Regularization introduces penalty terms on variables into original problems when solving localization problems and is one of the most direct approaches to preventing the appearance of ill conditions. The most typical regularization method is Tikhonov regularization (TR) \cite{benning2018modern} (sometimes referred to, under generalization, as the $L_{2}$-norm regularization \cite{turk2018identification}). An improved TR is proposed by Fuhry \emph{et al.} \cite{fuhry2012new}, which shows the relation between the truncated singular value decomposition (TSVD) method and the TR method.

The improved TR provides a special choice of the regularization matrix, which is further developed by Yang \emph{et al.} \cite{yang2015modified}, Noschese \emph{et al.}\cite{noschese2016some}, Mohammady \emph{et al.}\cite{mohammady2020extension} and Cui \emph{et al.}\cite{cui2020special}. A fractional filter method \cite{klann2008regularization} is proposed by Klann and developed by Mekoth who presents the fractional TR method \cite{mekoth2021fractional}. A thorough overview of modern regularization methods is covered in \cite{benning2018modern}. For a regularization method, such as the TR method, one of the key procedures is the determination of regularization parameters. There are many methods to select the regularization parameter, such as L-curve and generalized cross-validation (GCV) methods \cite{hansen1993use}. However, these parameter-selecting methods are posterior or empirical methods, which means that to acquire the regularization parameters, iterative algorithms or data related to the specific problems and environments are required. Therefore, they are computationally expensive \cite{gazzola2021iteratively}. The bias of the regularization solution is also a main topic in regularization. Bias-correction methods using iteratively estimated variance components, proposed by Xu \emph{et al.}\cite{xu2006variance}, Shen \emph{et al.} \cite{shen2012bias} and Ji \emph{et al.}\cite{ji2022adaptive}, are proven to be effective to calculate the bias and achieve unbias solutions for some problems\cite{shen2012bias}. 

Applying and developing regularization to robot localization problems and their related areas in robotics have been an intensively studied area over the past decade \cite{nerurkar2009distributed, wang2014toa, yuan2019toa, liouane2021regularized, ning2020method, pan2022multi-station}. Nerurkar \emph{et al.} apply a Tikhonov regularization-based algorithm, the Levenberg-Marquardt (LM) algorithm, to estimate robots' poses in cooperative robot localization problems\cite{nerurkar2009distributed}. Wang \emph{et al.} \cite{wang2014toa} solve an ill-conditioned linearized localization problem using the TR method. Yuan \emph{et al.} \cite{yuan2019toa} present a factor graph-based framework and a regularized least square (RLS) for passive localization problems. Ning \emph{et al.} \cite{ning2020method} study a localization problem with truncated singular value decomposition regularization. Liouane \emph{et al.} \cite{liouane2021regularized} improve the location estimation accuracy of the sensor in a wireless sensor network with a regularized least square method. Pan \emph{et al.} \cite{pan2022multi-station} work on applying TR to the multi-sensor localization problem. Tang \emph{et al.} \cite{tang2018localization} and Hu \emph{et al.} \cite{hu2019multiple} study underwater localization problems with regularization to deal with ill-conditioned situations. Details of the Tikhonov regularization-based methods are studied  \cite{qian2021vehicle,yu2021passive,fang2019robust} to solve specific problems in related areas of robot localization. The Tikhonov regularization is introduced into a kernel learning method to provide the generalization ability of a kernel model and improve the localization behavior of the kernel model when the robot is in high dynamics conditions\cite{qian2021vehicle}. Yu \emph{et al.}\cite{yu2021passive} study the three-dimensional passive localization problem that is vulnerable to the influence of measurement error and apply TR to solve the ill-conditioned localization problem. Fang \emph{et al.} \cite{fang2019robust} propose a robust Newton algorithm based on TR by avoiding the ill-conditioned Hessian matrix caused by the bad initial value. Their results show improvements in the accuracy, stability, and convergence performance of the proposed algorithms. Other applications of Tikhonov regularization include path-planning for small-scale robots \cite{zhao20203d}, motion control of a magnetic microrobot \cite{xing2022optimized}, and calibration of a robot positioning processes \cite{li2022diversified}. While Han \emph{et al.} work on applying $L_1$ regularization for collision avoidance in safe robot navigation \cite{han2023rda}, Wu \emph{et al.} focus on the analysis of the optimal TR matrix and its application for localization problems with global navigation satellite system (GNSS) \cite{wu2022regularized}. However, calculating the proposed optimal regularization matrix requires prior information (the first and second central moments) of an unknown integer ambiguity vector. Some regularization methods, e.g., Bayesian regularization, which can be simplified as Tikhonov regularization in some cases \cite{polson2019bayesian, feng2021element}, are used to train deep neural networks (DNNs) using the LM algorithm and then the DNNs are used to localize indoor robots to achieve high positioning precision\cite{zhang2019high}. Regularization in safe and smooth robot localization when considering continuous movements of robots\cite{dümbgen2023safe} is studied by Dümbgen. Dong \emph{et al.} \cite{dong2023trajectory} study a localization problem of a flying robot with a single UWB ranging measurement and IMU measurement, in which they introduce velocity penalty terms to the objective function of their problem and solve the problem by the LM method with regularization. The use of regularization in vision tasks, such as feature detection and matching, 3-D point cloud registration, and visual homing, which are applied in simultaneous localization and mapping (SLAM) for robot localization problems \cite{strasdat2012visual,ma2018nonrigid, guizilini2022learning}, is also studied as there are many ill-conditioned situations in SLAM. However, additional efforts need to be made to analyze the impact and explain the effect of regularization on the solutions.

This work focuses on ill-conditioned problems with a tractable inverse mapping. This paper proposes a high-order regularization method to solve ill-conditioned problems and get stable numerical solutions to robot localization tasks. The main contributions of this work are summarized as follows.
\begin{enumerate}
    \item A new high-order regularization method is constructed. The proposed high-order regularization method explains and overcomes the over-smoothing problem in the Tikhonov regularization. It also bridges the least square method and the Tikhonov regularization method mathematically by approximating the inverse of a matrix. According to the approximation expression of the matrix inverse, the lower bound and the upper bound of the approximation residual are provided, which is used to derive the error bounds of the high-order regularization solutions, as well as the error bounds of the Tikhonov regularization solutions as TR is proven to be a low-order case of high-order regularization.
    \item We provide a new restructuring method for simplifying the selection of the regularization matrix. The presented simplification method is used to improve the numerical stability of the algorithm by modifying the condition number using the regularization matrix.
    \item We construct one \emph{a priori} criterion to obtain the optimal regularization matrix for the proposed high-order regularization method. The regularization matrix is explicitly and directly calculated according to the proposed criterion. The closed-form solution of the optimal regularization matrix under the criterion is also provided for some cases.
    \item Two bias-correction methods are provided to drive a possible unbiased high-order regularization.
    \item A bias-correction algorithm with a sliding window for the robot localization problems is proposed, and the ill-conditioned situation for the robot localization problem is discussed.
    \item The implementation of the high-order regularization method and the bias-correction algorithm for robot localization in 3D environments are also presented.
\end{enumerate}

The rest of this paper is organized as follows. In Section \ref{introduce_tikhonov_regularization}, regularization for ill-conditioned problems in robot localization is introduced. In Section \ref{propose_high-order_regularization}, a high-order regularization method is proposed, which shows the relationship between the LS method and the TR method and the reason for the over-smoothness of the TR solution. The simplification method and a prior criterion are proposed to obtain the optimal regularization matrix of the high-order regularization method. Two bias-correction techniques for the high-order regularization method are also presented in Section \ref{propose_high-order_regularization}. In Section \ref{robot_localization_sliding_window}, the application and the ill-conditioned issue in robot localization using a sensor network and in other robotics problems are discussed, and a bias-correction method with a sliding window is described. In Section \ref{simulation_experiment}, some simulation results, as well as experimental results, are given to demonstrate the performance of the proposed method. The summarization and future work of this paper are provided in Section \ref{conclusions}.

\subsection*{Notation}
Some notations are summarized as follows

\begin{tabular}{ll}
$\mathbb{R}, \mathbb{N}$  &\begin{tabular}{l} 
The set of real numbers, natural numbers; and\\
$\mathbb{R}^n$, the $n$-dimension real vector space, \\
$\mathbb{R}^{m \times n}$, the set of the $m$-by-$n$ matrices. \\
\end{tabular} \\
$\mathrm{S^n}$ &\begin{tabular}{l} 
The set of $n$-dimension symmetry matrices; and\\
$\mathrm{S}_{+}^n$, the set of positive semidefinite matrices, \\
$\mathrm{S}_{++}^n$, the set of positive definite matrices.
\end{tabular} \\
$A$ &\begin{tabular}{l}  A matrix; and $A^{-1}$, the inverse of $A$, \\ 
$A^T$, the transpose of $A$,\\
$A^k$, the $k$th power of $A$ with $k \in \mathbb{N}$.\\
\end{tabular} \\
$x$ &\begin{tabular}{l} A variable; and $\hat x$, the estimation of $x$, \\
$x^*$, the true value of $x$.\\
\end{tabular} \\
$b$ &\, A vector; and $\|b\|$, the 2-norm of $b$.
\end{tabular}

\section{Ill-conditioned problems in robot localization and their Tikhonov regularization}\label{introduce_tikhonov_regularization}

To solve ill-conditioned robot localization problems, we propose a high-order regularization method and a theoretical solution for selecting the regularization parameter based on an \emph{a priori} criterion. The proposed method is superior to the TR method and overcomes the over-smoothing problem in the TR method.

A range of localization problems, such as robot localization problems using a sensor network, Ultra-Wideband (UWB) aided robot localization, pose initialization in SLAM \cite{sayed2005network,cao2006sensor,wang2017ultra,qin2018vins,guo2019ultra,zhang2016cooperative}, can be described using linear models; even when the original models are nonlinear, their linearized approximation \cite{song2019uwb} still leads to meaningful solutions by taking the solution of the linearized equation as the infinitesimal change. These nonlinear problems are solved by methods like the Gauss-Newton method, which mainly involves solving a linearized incremental equation. Some problems are described using specialized linear models, such as the weighted least squares form. However, these specialized forms can be transformed into the standard linear model. Consider a set of linear equations
\begin{equation} \label{problem_1}
Ax = b,
\end{equation}
or its related minimization problem
\begin{equation}\label{problem_min_1}
\min _{x \in \mathbb{R}^n}\|A x-b\|^2,
\end{equation}
where $x \in \mathbb{R}^n, A \in \mathbb{R}^{m \times n}, b \in \mathbb{R}^m$, and the norm $\|\cdot\|$ is the 2-norm. The least square (LS) method and its variations are often used in solving such linear problems described by equation \eqref{problem_1}. The solution of the LS method is given by
\begin{equation} \label{Ls_soluntion}
\begin{aligned}
& \hat{x}_{ls}=\left(A^T A\right)^{-1} A^T b.
\end{aligned}
\end{equation}
However, it is more challenging when the matrix $A$ is ill-conditioned (\cite{horn2012matrix}, Chapter 5.8); in this case, an approximate solution given by the TR method \cite{jia2020regularization} is handy. A standard form of the TR solution is found in \cite{jia2020regularization}, which is given by
\begin{equation}\label{TR_soluntion}
\begin{aligned}
& \hat{x}_{tr}=\left(A^T A+\mu^2 I\right)^{-1} A^T b,
\end{aligned}
\end{equation}
where $\mu$ is a given regularization parameter. This solution solves the following problem
\begin{equation} \label{TR_ap_problem_min}
\begin{aligned}
& \min _{x \in \mathbb{R}^n}\left\{\|A x-b\|^2+\mu^2\|x\|^2\right\}, 
\end{aligned}
\end{equation}
which is an approximate problem of the original problem \eqref{problem_min_1} with an extra penalty term called the regularization term.

It is known, though, that the TR solution is overly smooth \cite{klann2008regularization} without a proper justification of adding a regularization term to the original problem \eqref{problem_min_1}. We explicitly show that the TR solution is an approximate solution of the LS solution in terms of matrix inverse approximation by adding the regularization term to the original problem. Computing the inverse of the ill-conditioned matrix is the main reason that the LS solution is unstable.

\section{New high-order regularization method}\label{propose_high-order_regularization}
To improve the numerical stability of ill-conditioned problems, which, in other words, turns the ill-conditioned problems into well-conditioned ones, and get a numerically stable solution to the original problems \eqref{problem_1}, we propose a novel high-order regularization (HR) method. Then to obtain an optimal regularization matrix, we introduce an \emph{a priori} criterion, taking into account the trade-off between the estimation bias of the high-order regularization method and the improvement in the numerical stability of the approximation solution of the problem \eqref{problem_min_1}.

In the following subsections, we present the details of the proposed high-order regularization methods.

\subsection{Matrix inverse and the high-order regularization method} 
To obtain the high-order regularization solution, we first rewrite the normal matrix $A^T A$ as
\begin{equation}
A^T A=\left(I-R\left(A^T A+R\right)^{-1}\right)\left(A^T A+R\right), \end{equation}
then recall the matrix series \cite{horn2012matrix}
\begin{equation}
\begin{aligned}
& \left(I-R\left(A^T A+R\right)^{-1}\right)^{-1}=\sum_{i=0}^{\infty}\left(R\left(A^T A+R\right)^{-1}\right)^i, \\
&\text { if } \rho\left(R\left(A^T A+R\right)^{-1}\right)<1, \\
\end{aligned}
\end{equation}
where $\rho(\cdot)$ is the spectral radius of a matrix. Thus, the inverse of the matrix $A^T A$ is
\begin{equation} \label{Approximate_expression}
\begin{aligned}
& \left(A^T A\right)^{-1}=\left(\left(I-R\left(A^T A+R\right)^{-1}\right)\left(A^T A+R\right)\right)^{-1} \\
& =\left(A^T A+R\right)^{-1}\left(I-R\left(A^T A+R\right)^{-1}\right)^{-1} \\
& =\left(A^T A+R\right)^{-1} \sum_{i=0}^{\infty}\left(R\left(A^T A+R\right)^{-1}\right)^i.
\end{aligned}
\end{equation}

Keeping only the first $k+1$ terms of the matrix power series in equation \eqref{Approximate_expression}, one has the $k$th-order regularization method \eqref{HR_soluntion}. Assuming that $A$ is known exactly, the proposed high-order regularization solution is
\begin{equation} \label{HR_soluntion}
\begin{aligned}
& \hat{x}^k_{hr}=\left(A^T A+R\right)^{-1} \sum_{i=0}^k\left(R\left(A^T A+R\right)^{-1}\right)^i A^T b,
\end{aligned}
\end{equation}
where the index $ k $ is the order of the proposed method, the regularization matrix $R$ satisfies the spectral radius condition $\rho\left(R\left(A^T A+R\right)^{-1}\right)<1$, and the matrix $A^T A$ and the new matrix $\left(A^T A+R\right)$ are nonsingular. We show that if $R \in \mathrm{S}_{+}^n$, the convex set of symmetric positive semidefinite (PSD) matrices, then the spectral radius condition is always satisfied. Then the nonsingular matrix $\left(A^T A+R\right) \in \mathrm{S}_{++}^n$, which is the convex set of the symmetric positive definite (PD) matrix. The equation \eqref{HR_soluntion} gives a $k$th-order regularized approximate solution to the equation \eqref{problem_1}.
The high-order regularization method, therefore, solves the approximate problem rather than the ill-conditioned problem \eqref{problem_1} directly (see \textbf{Appendix A}).

\begin{lemma}
The TR solution given in \eqref{TR_soluntion} is a special case of the proposed HR solution \eqref{HR_soluntion} for $k=0$.
\end{lemma}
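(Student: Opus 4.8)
The plan is to simply specialize the high-order regularization formula \eqref{HR_soluntion} to the case $k=0$ and verify that it collapses to the Tikhonov solution \eqref{TR_soluntion}. Setting $k=0$ means the sum $\sum_{i=0}^{k}\left(R\left(A^T A+R\right)^{-1}\right)^i$ retains only the single term $i=0$, and I would first observe that this term is $\left(R\left(A^T A+R\right)^{-1}\right)^0 = I$. Substituting this into \eqref{HR_soluntion} immediately yields
\begin{equation}
\hat{x}^0_{hr}=\left(A^T A+R\right)^{-1} I \, A^T b = \left(A^T A+R\right)^{-1} A^T b.
\end{equation}

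Having reduced the $k=0$ case to $\left(A^T A+R\right)^{-1} A^T b$, the remaining step is to choose the regularization matrix $R$ so that this matches \eqref{TR_soluntion} exactly. The natural choice is $R = \mu^2 I$, which is a scalar multiple of the identity and hence lies in $\mathrm{S}_{+}^n$, so the spectral radius condition $\rho\left(R\left(A^T A+R\right)^{-1}\right)<1$ is automatically satisfied (as promised in the text following \eqref{HR_soluntion}). With this substitution one obtains $\hat{x}^0_{hr}=\left(A^T A+\mu^2 I\right)^{-1} A^T b$, which is precisely $\hat{x}_{tr}$ in \eqref{TR_soluntion}. This establishes that TR is recovered as the zeroth-order instance of the proposed method under the identification $R = \mu^2 I$.

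I would briefly note for completeness that the nonsingularity hypotheses needed for \eqref{HR_soluntion} to be well-defined hold here: since $A^T A$ is symmetric positive semidefinite and $\mu^2 > 0$, the matrix $A^T A + \mu^2 I$ is symmetric positive definite and therefore invertible, so the expression $\left(A^T A+\mu^2 I\right)^{-1}$ makes sense. This is essentially the same observation used in the general claim that $R \in \mathrm{S}_{+}^n$ forces $\left(A^T A+R\right)\in \mathrm{S}_{++}^n$.

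This proof is almost entirely a substitution-and-simplification argument, so I do not anticipate a genuine technical obstacle; the only point requiring a moment of care is confirming that the empty-beyond-$i=0$ partial sum reduces to the identity matrix rather than to zero, and that the choice $R = \mu^2 I$ is the intended specialization rather than a more general $R$. Everything else follows directly from the definitions already introduced in \eqref{HR_soluntion} and \eqref{TR_soluntion}.
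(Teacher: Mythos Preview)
Your proposal is correct and follows essentially the same approach as the paper: set $k=0$ so the sum collapses to the identity, obtain $\hat{x}^0_{hr}=\left(A^T A+R\right)^{-1} A^T b$, and then specialize $R=\mu^2 I$ to recover \eqref{TR_soluntion}, noting that the spectral radius condition is automatically met. Your version is simply a bit more explicit about the $i=0$ term and the invertibility of $A^TA+\mu^2 I$, but the argument is the same.
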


\begin{proof}
For $k=0$, the HR solution reduces to
\begin{equation}\label{HR_solution_k0}
\hat{x}^0_{hr}=\left(A^T A+R\right)^{-1} A^T b,
\end{equation}
which is equivalent to the TR solution with $R=\mu^2 I$, and the spectral radius condition $\rho\left(R\left(A^T A+R\right)^{-1}\right)<1$ required in the matrix series is always satisfied.
\end{proof}

In general, select $k>0$, and a regularization matrix that meets the spectral radius condition to alleviate the over-smoothing of the TR method. For $k=1$, the HR solution is simplified to
\begin{equation}\label{HR_soluntion_k1}
\begin{aligned}
\hat{x}^1_{hr}=&\left(A^T A+R\right)^{-1} A^T b\\
&+\left(A^T A+R\right)^{-1} R\left(A^T A+R\right)^{-1} A^T b \\
=&\left ( I+\left (A^T A+R\right)^{-1} R \right)\left(A^T A+R\right)^{-1} A^T b,
\end{aligned}
\end{equation}
which is a better solution than the TR solution of problem \eqref{problem_min_1} in terms of the matrix inverse approximation of the matrix
$A^T A$. To alleviate the over-smoothness of the TR method, the equation \eqref{HR_soluntion_k1} shows that adding an extra term $\left(A^T A+R\right)^{-1} R\left(A^T A+R\right)^{-1} A^T b$ is equivalent to multiplying by an appropriate factor $\left ( I+\left (A^T A+R\right)^{-1} R \right)$.

\begin{lemma} \label{Lemma_2}
If $R$ is symmetric positive semidefinite and $A^T A$ is positive definite, then the spectral radius condition $\rho\left(R\left(A^T A+R\right)^{-1}\right)<1$ has always been satisfied.
\end{lemma}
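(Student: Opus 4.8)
The plan is to reduce the spectral-radius condition on the generally nonsymmetric matrix $R\left(A^T A+R\right)^{-1}$ to an eigenvalue statement about a symmetric matrix via a similarity transform. First I would abbreviate $M=A^T A$ and $S=A^T A+R$. Since $M$ is SPD and $R$ is PSD, their sum $S\in\mathrm{S}_{++}^n$, so the symmetric square root $S^{1/2}$ and its inverse $S^{-1/2}$ are well defined. The key observation is that $R S^{-1}$ is similar to $S^{-1/2} R S^{-1/2}$ under conjugation by $S^{1/2}$, namely $S^{-1/2}\left(R S^{-1}\right)S^{1/2}=S^{-1/2} R S^{-1/2}$. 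Because similar matrices share their eigenvalues, $\rho\left(R S^{-1}\right)=\rho\left(S^{-1/2} R S^{-1/2}\right)$, and the latter matrix is symmetric, so its eigenvalues are real.

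Next I would exploit that $S^{-1/2} R S^{-1/2}$ is a congruence transform of the PSD matrix $R$, hence itself PSD; therefore all its eigenvalues are nonnegative and its spectral radius equals its largest eigenvalue $\lambda_{\max}$. Writing $R=S-M$ gives the complementary decomposition $S^{-1/2} R S^{-1/2}=I-S^{-1/2} M S^{-1/2}$. Since $M=A^T A$ is positive definite, its congruence $S^{-1/2} M S^{-1/2}$ is also positive definite, so $\lambda_{\min}\left(S^{-1/2} M S^{-1/2}\right)>0$. It then follows that $\lambda_{\max}\left(S^{-1/2} R S^{-1/2}\right)=1-\lambda_{\min}\left(S^{-1/2} M S^{-1/2}\right)<1$, which yields $\rho\left(R\left(A^T A+R\right)^{-1}\right)<1$, and in fact the sharper two-sided bound $0\le\rho\left(R\left(A^T A+R\right)^{-1}\right)<1$.

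The main subtlety, rather than any heavy computation, is that $R\left(A^T A+R\right)^{-1}$ need not be symmetric, so one cannot directly assert its eigenvalues are real and nonnegative. The symmetrization step through conjugation by $S^{1/2}$ is the crux: it converts the question into a symmetric eigenvalue problem in which the spectral radius coincides with the top eigenvalue, and the decomposition $R=S-M$ then makes the strict bound fall out of the positive definiteness of $M$. A minor care point is to confirm that $S^{1/2}$ and $S^{-1/2}$ exist, which follows from $S\in\mathrm{S}_{++}^n$ as the sum of a positive definite and a positive semidefinite matrix, a fact already recorded in the excerpt.
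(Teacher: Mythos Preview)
Your proof is correct. The paper's own proof is a one-line citation: it notes that $A^TA+R\succ R$ in the Loewner order and invokes Theorem~7.7.3 of Horn and Johnson to conclude the spectral-radius bound. What you have written is precisely the standard proof of that implication, carried out in full: symmetrize $RS^{-1}$ by conjugation with $S^{1/2}$, land on the symmetric PSD matrix $S^{-1/2}RS^{-1/2}$, and use $R=S-M$ to bound its top eigenvalue by $1-\lambda_{\min}\bigl(S^{-1/2}MS^{-1/2}\bigr)<1$. So the two arguments are the same idea at different levels of abstraction---the paper outsources the step to a textbook reference, while you unpack it inline. Your version has the modest advantage of being self-contained and of making the two-sided bound $0\le\rho<1$ explicit.
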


\begin{proof}
If $R$ is symmetric positive semidefinite and $A^T A$ is symmetric positive definite, then $\left(A^T A+R\right)$ is symmetric positive definite and $A^T A+R \succ R$, where $\succ$ defines the Loewner partial ordering of symmetric positive definite matrices \cite{horn2012matrix}. Then according to \cite{horn2012matrix} (Theorem 7.7.3, p. 494), it follows that the spectral radius condition is satisfied.
\end{proof}

Since one can always design a symmetric regularization matrix $R$ for robot localization problems, according to Lemma \ref{Lemma_2}, the required spectral radius condition is always satisfied in such a selection. We introduce some strategies to determine an appropriate regularization matrix later in the paper.

\begin{proposition} \label{Proposition_1}
Let $R \in \mathrm{S}_{+}^n, A^T A \in \mathrm{S}_{++}^n$, then the regularization matrix $R$ and the inverse $(A^T A+R)^{-1}$ are simultaneously diagonalizable via similarity by some unitary matrix if and only if $R$ and $(A^T A+R)^{-1}$ commute; if and only if $R\left(A^T A+R\right)^{-1}$ is normal; and if and only if the matrix $R\left(A^T A+R\right)^{-1}$ is a positive semidefinite matrix.
\end{proposition}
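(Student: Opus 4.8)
The plan is to fix the notation $M := (A^T A + R)^{-1}$ and to record at the outset that $M \in \mathrm{S}_{++}^n$: indeed $A^T A + R \in \mathrm{S}_{++}^n$ (the sum of an SPD and a PSD matrix is SPD, exactly as used in Lemma~\ref{Lemma_2}), and the inverse of an SPD matrix is again SPD. Thus the object of study is the product $RM$ of a symmetric PSD matrix $R$ and a symmetric positive definite matrix $M$. I would then establish the four statements as equivalent by proving (1)$\Leftrightarrow$(2) through the spectral theorem and closing the loop (2)$\Rightarrow$(4)$\Rightarrow$(3)$\Rightarrow$(2) among the three algebraic conditions, hubbing the normality and definiteness statements onto the commuting statement.

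The workhorse I would prove first is a spectral fact about $RM$ that assumes none of (1)--(4). Since $M\in\mathrm{S}_{++}^n$ admits a symmetric positive definite square root $M^{1/2}$, the similarity
\[
M^{1/2}\,(RM)\,M^{-1/2} = M^{1/2} R M^{1/2} =: S
\]
exhibits $RM$ as similar to $S = M^{1/2} R M^{1/2}$. Here $S$ is symmetric, and $x^T S x = (M^{1/2}x)^T R\,(M^{1/2}x) \ge 0$ for every $x$, so $S \in \mathrm{S}_+^n$. Consequently the eigenvalues of $RM$ coincide with those of $S$ and are \emph{real and nonnegative}, whether or not $R$ and $M$ commute. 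This reality of the spectrum is precisely what will force the normality condition (3) to collapse onto symmetry.

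With the workhorse in hand the easy arcs are short. For (1)$\Leftrightarrow$(2) I would invoke the spectral theorem for commuting Hermitian matrices \cite{horn2012matrix}: the \emph{symmetric} matrices $R$ and $M$ are simultaneously unitarily diagonalizable if and only if $RM = MR$. For (2)$\Rightarrow$(4), commuting gives $(RM)^T = M^T R^T = MR = RM$, so $RM$ is symmetric, and together with the nonnegative spectrum just obtained this yields $RM\in\mathrm{S}_+^n$. For (4)$\Rightarrow$(3) there is nothing to do, since a symmetric PSD matrix is in particular normal. The one remaining arc, (3)$\Rightarrow$(2), is where the real work sits.

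The hard part is (3)$\Rightarrow$(2). A normal matrix is unitarily diagonalizable, so I would write $RM = UDU^{*}$ with $U$ unitary and $D$ diagonal carrying the eigenvalues of $RM$. By the workhorse those eigenvalues are real, so $D=\overline{D}$ and hence
\[
(RM)^{*} = U D^{*} U^{*} = U \overline{D}\, U^{*} = U D U^{*} = RM,
\]
meaning $RM$ is Hermitian; being real it is symmetric, whence $MR = (RM)^T = RM$, which closes the loop back to (2) and thence to (1). The subtlety to watch is that a normal matrix need not be Hermitian in general (a planar rotation is the standard counterexample), and it is only the real nonnegative spectrum supplied by the similarity to $S$ that forces $D$ to be real and excludes such behavior. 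I would also note that this same argument gives (3)$\Leftrightarrow$(4) directly---normal plus real spectrum equals Hermitian, and nonnegative eigenvalues then give PSD---which serves as a consistency check on the full chain of equivalences.
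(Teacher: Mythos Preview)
Your proof is correct and fully self-contained, but it differs in style from the paper's argument, which is essentially a proof by citation. The paper sets $H=(A^TA+R)^{-1}$, notes $H\in\mathrm S_{++}^n$, and then dispatches the equivalences by invoking standard references: (1)$\Leftrightarrow$(2) is Theorems~1.3.12 and~4.1.6 in \cite{horn2012matrix} (commuting normal matrices are simultaneously unitarily diagonalizable), and (2)$\Leftrightarrow$(3)$\Leftrightarrow$(4) is Theorem~4.5.15 in \cite{horn2012matrix} or Theorem~3 of \cite{meenakshi1999product}, which characterize when a product of two PSD matrices is normal, PSD, or commuting. In contrast, you actually \emph{prove} the relevant special case of those cited results: your similarity $M^{1/2}(RM)M^{-1/2}=M^{1/2}RM^{1/2}\in\mathrm S_+^n$ is exactly the device underlying Meenakshi's theorem, and it is what makes the step (3)$\Rightarrow$(2) go through by forcing the spectrum of $RM$ to be real. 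What your approach buys is transparency---a reader sees precisely why normality of $RM$ collapses to symmetry here---at the cost of a few more lines; what the paper's approach buys is brevity, at the cost of sending the reader to two outside sources for the substantive content.
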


\begin{proof}
Take $H=\left(A^T A+R\right)^{-1} \in \mathrm{S}_{++}^n$, then $H$ is diagonalizable via similarity by some unitary matrix. According to \cite{horn2012matrix} (Theorem 1.3.12, p. 62 and Theorem 4.1.6, p. 229), $R$ and $H$ are simultaneously diagonalizable by some unitary matrix if and only if $R$ and $H$ commute. From \cite{horn2012matrix} (Theorem 4.5.15, p. 286) or \cite{meenakshi1999product} (Theorem 3), $R$ and $H$ commute if and only if $RH$ is normal, and if and only if $RH$ is a positive semidefinite matrix.
\end{proof}

Proposition \ref{Proposition_1} shows important and useful properties of the matrix $R\left(A^T A+R\right)^{-1}$ 
and also gives an insight into selecting the regularization matrix $R$.

\subsection{Explanation of regularization methods}
From equation \eqref{Approximate_expression}, one obtains an approximation expression of the inverse of $A^T A$. To approximate the inverse of the matrix $A^T A$, the TR method uses the regularization matrix $\mu^2 I$ in its solution. As the TR solution only contains one term of the approximation expression \eqref{Approximate_expression}, that is the main reason why the TR solution is often overly smooth.

In our proposed method, it uses more than one term of the approximation expression, which enhances the approximate performance of the HR solution. In general, using the first two terms ($k=1$) of the matrix series is a wise choice, which requires a similar computation O($mn^2$) as LS and TR.

One can improve the numerical stability of the ill-conditioned problem described by equation \eqref{problem_1} by calculating its approximate solution using the new matrix $(A^T A+R)$ with a small condition number, which is achieved by selecting a suitable regularization matrix $R$. It is known that matrix inversion is very computationally expensive (O($n^3$)) in matrix operations. The computing result of $(A^T A+R)^{-1}$ can be reused in one calculation of the HR solution. This means that the computation expenses of the HR method and TR method are almost the same. As the new matrix $(A^T A+R)$ can be designed as a sparse symmetric matrix, the approximation expression \eqref{Approximate_expression} also provides a fast and computable approximate solution for calculating the inverse of a matrix.

\subsection{Error bound}
We discuss the error bound \cite{mekoth2021fractional} of the HR method in this section. First, the matrix series is
\begin{equation}\label{Approximation_residual}
\begin{aligned}
F\left(R\left(A^T A+R\right)^{-1}\right):= \sum_{i=k+1}^{\infty}\left(R\left(A^T A+R\right)^{-1}\right)^i,
\end{aligned}
\end{equation}
where $k$ is the same index of the equation \eqref{HR_soluntion}. This is the approximation residual 
of $\left(I-R\left(A^T A+R\right)^{-1}\right)^{-1}$ in regularization solutions. Then the bias between the LS solution and the HR solution is obtained by
\begin{equation}\label{Bias_LS_HR}
\begin{aligned}
\hat{x}_{ls}-\hat{x}^k_{hr}&=\left(A^T A+R\right)^{-1} F\left(R\left(A^T A+R\right)^{-1}\right) A^T b \\
&=\left(A^T A\right)^{-1} \left(R\left(A^T A+R\right)^{-1}\right)^{k+1} A^T b.
\end{aligned}
\end{equation}
The error bound of the estimate HR method is
\begin{equation}\label{Error_bound_HR}
\begin{aligned}
\left\|\hat{x}_{ls}-\hat{x}^k_{hr}\right\| &=\left\|\left(A^T A+R\right)^{-1} F\left(R\left(A^T A+R\right)^{-1}\right) A^T b\right\|.\\
\end{aligned}
\end{equation}
For $k=1$, the bias has another similar expression according to a similar approximation expression of the matrix $A^T A$ in the LS solution
\begin{equation}
\begin{aligned}
 &\hat{x}_{ls}-\hat{x}^1_{hr}=\left(A^T A+R\right)^{-1} \sum_{i=2}^{\infty}\left(-R\left(A^T A\right)^{-1}\right)^i A^T b \\
& =\left(A^T A+R\right)^{-1} \\
& \quad \times\left(\left(I+R\left(A^T A\right)^{-1}\right)^{-1}-I+R\left(A^T A\right)^{-1}\right) A^T b, \\
\end{aligned}
\end{equation}
which requires the condition $\rho\left(R\left(A^T A\right)^{-1}\right)<1$  (see \textbf{Appendix B}).

The difference between the TR solution and the LS solution is
\begin{equation}\label{Difference_TR_HR}
\begin{aligned}
&\left\|\hat{x}_{tr}-\hat{x}_{ls}\right\| \\
& = \left\|\left(A^T A+R\right)^{-1} \sum_{i=1}^{\infty}\left(R\left(A^T A+R\right)^{-1}\right)^i A^T b\right\| .
\end{aligned}
\end{equation}

From equation \eqref{Error_bound_HR}, the difference between the LS solution and the HR solution is bounded. As it is known that the LS solution is an unbiased solution for problem \eqref{problem_min_1}, the difference between the LS solution and the HR solution is a bias of the HR solution. The following discussion of the approximation residual and equation \eqref{Bias_LS_HR} show that this bias can be calculated for any given $A$ before the estimation of the solution of the problem \eqref{problem_min_1}, which is very important for some problems, such as mobile robot localization problems discussed in the next section that estimate the solutions with the same given $A$ but with different $b$. From equation \eqref{Difference_TR_HR}, the difference between the TR solution and the HR solution is bounded.

For an HR solution, if $k>0$, the approximate residual $F\left(R\left(A^T A+R\right)^{-1}\right)$ of the HR solution has fewer terms than that in the TR solution. Then according to the triangle inequality of a norm, the error bound of the HR solution is smaller than the error bound of the TR solution (see \textbf{Appendix C}). In other words, the HR solution is closer to the unbiased solution of the ill-conditioned problem under the least squares criterion. From equation \eqref{Approximate_expression}, this is achieved by approximating the inverse of the matrix $A^T A$ using more than one term in the matrix power series, which also means that the over-smoothing problem in TR is solved through matrix inverse approximation.

\subsection{Stability and consistency}

For the matrix $R\left(A^T A+R\right)^{-1}$, if its spectral radius satisfies $\rho\left(R\left(A^T A+R\right)^{-1}\right)<1$, then one has
\begin{equation}
\lim _{i \rightarrow \infty}\left(R\left(A^T A+R\right)^{-1}\right)^i \rightarrow O_{n},
\end{equation}
where $O_{n} \in \mathbb{R} ^{n \times n}$ is a zero matrix.

For the $(k+1)$th-order solution $\hat{x}^{k+1}_{hr}$ and the $k$th-order solution $\hat{x}^k_{hr}$, according to equation \eqref{Bias_LS_HR}, one has
\begin{equation} \label{difference_two_hr_solution}
 \hat{x}^{k+1}_{hr}-\hat{x}^k_{hr}=\left(A^T A+R\right)^{-1}\left(R\left(A^T A+R\right)^{-1}\right)^{k+1} A^T b,
\end{equation}
which follows
\begin{equation}
\lim _{k \rightarrow \infty}\left(\hat{x}^{k+1}_{hr}-\hat{x}^k_{hr}\right) \rightarrow O,
\end{equation}
where $O$ is the zero vector.

Note that according to equation \eqref{Bias_LS_HR}, the bias of a given HR solution is exactly with the factor $R\left(A^T A+R\right)^{-1}$. A larger $k$ ensures a smaller bias, and for $k \rightarrow \infty$ or $R \rightarrow O$, the bias is zero. One is possible to design a matrix $R$ that satisfies $\rho\left(R\left(A^T A+R\right)^{-1}\right)<1$ and ensures that the new matrix $A^T A+R$ is well-conditioned. For some specific selection of $R$, the new matrix $A^T A+R$ becomes a scalar matrix and reaches the smallest condition number, which admits the optimal numerical stability. 

Hence, the proposed HR method provides a stable and consistent solution. In the following subsections, we show that, in some cases, the approximation residual of the matrix inverse has a lower bound and an upper bound. We also introduce how to select $R$ to ensure that $A^T A+R$ and $R\left(A^T A+R\right)^{-1}$ meet the requirements.

\subsection{The approximation residual of the matrix inverse} \label{bounds_approximation_residual}
From the approximation expression \eqref{Approximate_expression} and equation \eqref{Approximation_residual}, the approximation residual of $\left(I-R\left(A^T A+R\right)^{-1}\right)^{-1}$ is $F(R(A^T A+R)^{-1})$ . We show that there is a relationship of the approximation residual
\begin{equation}
\begin{aligned}
F\left(R\left(A^T A+R\right)^{-1}\right) \preceq o^\alpha\left(R\left(A^T A+R\right)^{-1}\right),
\end{aligned}
\end{equation}
where $\alpha$ is a constant, $o^\alpha\left(A\right)= C A^ {\alpha}$ ($C$ is a matrix), the symbol $\preceq$ represents a generalized inequality relationship (partial ordering) \textit{succeeds or equal to} of two matrices \cite{boyd2004convex}, which is similar to the sum of an infinite geometric series.
\begin{lemma}
 If the spectral radius $\rho\left(R\left(A^T A+R\right)^{-1}\right)<1$, then the approximation residual $F\left(R\left(A^T A+R\right)^{-1}\right)$ from expression \eqref{Approximate_expression} which is defined in equation \eqref{Approximation_residual}, is described by two commuting matrices, and the residual is given by
\begin{equation} \label{Approximation_residual_F}
\begin{aligned}
&F\left(R\left(A^T A+R\right)^{-1}\right)\\
&=\left(I-R\left(A^T A+R\right)^{-1}\right)^{-1}\left(R\left(A^T A+R\right)^{-1}\right)^{k+1},
\end{aligned}
\end{equation}
and if the matrix $R\left(A^T A+R\right)^{-1}$ is diagonalizable, then these two commuting matrices are simultaneously diagonalizable matrices.\highlight{}\unhighlight{}
\end{lemma}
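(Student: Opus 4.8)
The plan is to abbreviate $M := R(A^T A+R)^{-1}$ and to recognize $F(M)$ as the tail of a Neumann series, then factor out the common power $M^{k+1}$. First I would invoke the spectral-radius hypothesis $\rho(M)<1$ to guarantee convergence of the full Neumann series $\sum_{i=0}^\infty M^i = (I-M)^{-1}$, exactly the fact already used to derive expression \eqref{Approximate_expression}. Since every term of the tail $F(M)=\sum_{i=k+1}^\infty M^i$ carries at least the factor $M^{k+1}$, I would reindex and write $F(M)=\sum_{j=0}^\infty M^{k+1+j}=M^{k+1}\sum_{j=0}^\infty M^j = M^{k+1}(I-M)^{-1}$, and equally $=(I-M)^{-1}M^{k+1}$. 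This establishes the closed form claimed in \eqref{Approximation_residual_F}.

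Next, to verify the commuting claim, I would note that $(I-M)^{-1}$ is itself a convergent power series in $M$, hence a function of the single matrix $M$, so it commutes with every polynomial in $M$ and in particular with $M^{k+1}$. Equivalently, one checks directly that $M^{k+1}(I-M)=(I-M)M^{k+1}$, since both sides equal $M^{k+1}-M^{k+2}$, and then pre- and post-multiplies by $(I-M)^{-1}$ to obtain $M^{k+1}(I-M)^{-1}=(I-M)^{-1}M^{k+1}$. This exhibits the residual as a product of the two commuting matrices $(I-M)^{-1}$ and $M^{k+1}$, and it also confirms that the two orderings produced in the previous step coincide.

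Finally, for the simultaneous-diagonalization claim I would use the assumed diagonalizability of $M$, writing $M=P\Lambda P^{-1}$ with $\Lambda$ diagonal. Because $\rho(M)<1$, each eigenvalue satisfies $|\lambda_i|<1$, so $1-\lambda_i\neq 0$ and $(I-\Lambda)$ is an invertible diagonal matrix. Then $M^{k+1}=P\Lambda^{k+1}P^{-1}$ and $(I-M)^{-1}=P(I-\Lambda)^{-1}P^{-1}$ are diagonalized by the \emph{same} similarity $P$, which is precisely the assertion that the two commuting factors are simultaneously diagonalizable.

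I expect every step here to be essentially routine; the only point that demands care is justifying the term-by-term factorization of the infinite tail, namely that reindexing the convergent matrix series and pulling $M^{k+1}$ outside the summation is legitimate. This rests squarely on the spectral-radius condition, which guarantees absolute convergence of the series in any submultiplicative norm and thus licenses the rearrangement. Everything else, the commutation and the simultaneous diagonalization, follows immediately from the observation that both factors are functions of the single matrix $M$.
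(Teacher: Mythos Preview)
Your proposal is correct and follows essentially the same approach as the paper: both factor the tail of the Neumann series as $(I-M)^{-1}M^{k+1}=M^{k+1}(I-M)^{-1}$ to obtain the closed form and the commutation simultaneously, and then invoke diagonalizability of $M$ to diagonalize both factors with the same similarity. Your write-up is in fact more careful than the paper's, which simply states the two orderings are equal and that diagonalizability of $M$ yields simultaneous diagonalizability, whereas you justify the series manipulation, give an independent commutation check, and exhibit the common eigenbasis explicitly.
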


\begin{proof}
Rewrite the residual of the approximate expression, then one gets
\begin{equation}
\begin{aligned}
& F\left(R\left(A^T A+R\right)^{-1}\right)=\sum_{i=k+1}^{\infty}\left(R\left(A^T A+R\right)^{-1}\right)^i \\
& =\left(I-\left(R\left(A^T A+R\right)^{-1}\right)\right)^{-1}\left(R\left(A^T A+R\right)^{-1}\right)^{k+1} \\
& =\left(R\left(A^T A+R\right)^{-1}\right)^{k+1}\left(I-\left(R\left(A^T A+R\right)^{-1}\right)\right)^{-1},
\end{aligned}
\end{equation}
which shows that the two factors $\left(I-R\left(A^T A+R\right)^{-1}\right)^{-1}$ and $(R(A^T A+R)^{-1})^{k+1}$ in equation \eqref{Approximation_residual_F} commute. If $R\left(A^T A+R\right)^{-1}$ is diagonalizable, then these two diagonalizable matrices are simultaneously diagonalizable by some nonsingular matrix. 
\end{proof}

\begin{theorem} \label{Theorem_1}
If the matrix $R\left(A^T A+R\right)^{-1}$ is symmetric, then the approximation residual shown in equation \eqref{Approximation_residual_F} has an upper bound and a lower bound given as follows
\begin{equation}\label{Residual_lower_upper_bound}
\begin{aligned}
&\frac{1}{1-\lambda_{\min }} \left(R\left(A^T A+R\right)^{-1}\right)^{k+1} \preceq F\left(R\left(A^T A+R\right)^{-1}\right)\\
&\preceq \frac{1}{1-\lambda_{\max }} \left(R\left(A^T A+R\right)^{-1}\right)^{k+1},
\end{aligned}
\end{equation}
where $\lambda_{\min }$ and $\lambda_{\max }$ are the minimum and the maximum eigenvalues of the matrix $R\left(A^T A+R\right)^{-1}$, and there is always some $R$  such that the approximation method \eqref{Approximate_expression} and the equation \eqref{Residual_lower_upper_bound} are satisfied.\highlight{}\unhighlight{}
\end{theorem}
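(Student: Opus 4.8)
The plan is to reduce the matrix partial-ordering inequality \eqref{Residual_lower_upper_bound} to a family of scalar inequalities on the eigenvalues of $M := R(A^T A+R)^{-1}$, exploiting the fact that every matrix appearing in \eqref{Residual_lower_upper_bound} shares a common eigenbasis. Throughout, I read the notation $o^{k+1}(M)$ as the power $M^{k+1}$.

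First, I would observe that the symmetry hypothesis on $M$ is stronger than it looks. A real symmetric matrix is normal, so under the standing assumptions $R \in \mathrm{S}_+^n$ and $A^T A \in \mathrm{S}_{++}^n$, Proposition \ref{Proposition_1} applies and forces $M$ to be positive semidefinite. Together with the spectral radius condition $\rho(M)<1$, this confines every eigenvalue of $M$ to the interval $[0,1)$, so that $0 \le \lambda_{\min} \le \lambda_i \le \lambda_{\max} < 1$. This nonnegativity is exactly what makes the two-sided bound hold, so I would establish it at the outset; I expect it to be the true crux of the argument.

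Next, I would invoke the factored form of the residual from the preceding lemma, $F(M) = (I-M)^{-1} M^{k+1}$, and orthogonally diagonalize $M = Q \Lambda Q^T$ with $\Lambda = \mathrm{diag}(\lambda_1,\dots,\lambda_n)$. Then $F(M)$ and the candidate lower and upper bounds $\tfrac{1}{1-\lambda_{\min}} M^{k+1}$ and $\tfrac{1}{1-\lambda_{\max}} M^{k+1}$ are all diagonalized by the same $Q$, with eigenvalues $\tfrac{\lambda_i^{k+1}}{1-\lambda_i}$, $\tfrac{\lambda_i^{k+1}}{1-\lambda_{\min}}$, and $\tfrac{\lambda_i^{k+1}}{1-\lambda_{\max}}$, respectively. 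Since for simultaneously diagonalizable symmetric matrices the ordering $\preceq$ is equivalent to the eigenvalue-wise ordering, the two matrix inequalities collapse to the scalar claims $\tfrac{\lambda_i^{k+1}}{1-\lambda_{\min}} \le \tfrac{\lambda_i^{k+1}}{1-\lambda_i} \le \tfrac{\lambda_i^{k+1}}{1-\lambda_{\max}}$ for every index $i$.

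Finally, I would verify these scalar inequalities. Because $\lambda_i \ge 0$, we have $\lambda_i^{k+1} \ge 0$, so each inequality is controlled entirely by the monotonicity of $t \mapsto \tfrac{1}{1-t}$ on $[0,1)$: from $\lambda_{\min} \le \lambda_i$ one gets $\tfrac{1}{1-\lambda_{\min}} \le \tfrac{1}{1-\lambda_i}$, and from $\lambda_i \le \lambda_{\max}$ one gets $\tfrac{1}{1-\lambda_i} \le \tfrac{1}{1-\lambda_{\max}}$; multiplying through by the nonnegative scalar $\lambda_i^{k+1}$ preserves both. This is the very step that would fail without nonnegativity, since an odd power of a negative eigenvalue would flip the sign. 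For the existence clause, I would simply exhibit $R = \mu^2 I$, which commutes with $A^T A + R$ and so renders $M$ symmetric (indeed PSD) while satisfying $\rho(M)<1$, confirming that the hypotheses are never vacuous.
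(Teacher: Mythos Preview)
Your proposal is correct and follows essentially the same route as the paper: diagonalize $M=R(A^TA+R)^{-1}$, reduce the Loewner inequalities to eigenvalue-wise scalar inequalities, and invoke the monotonicity of $t\mapsto\tfrac{1}{1-t}$; for existence the paper likewise points to $R=\mu^2 I$. The one noteworthy difference is that the paper first establishes $\tfrac{1}{1-\lambda_{\min}}I\preceq(I-M)^{-1}\preceq\tfrac{1}{1-\lambda_{\max}}I$ and then simply ``multiplies $M^{k+1}$ on all sides,'' without pausing to say why that multiplication preserves the order; your explicit appeal to Proposition~\ref{Proposition_1} to secure $M\succeq 0$, and hence $\lambda_i^{k+1}\ge 0$, supplies exactly the justification that step needs, so your write-up is in fact a bit tighter on this point.
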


\begin{proof}
If $R\left(A^T A+R\right)^{-1}$ is symmetric (Proposition \ref{Proposition_1}), then $R\left(A^T A+R\right)^{-1}$ and $\left(I-R\left(A^T A+R\right)^{-1}\right)^{-1}$ are symmetric and diagonalizable. Apply the simultaneous diagonalization to the two matrices on the right-hand side of equation \eqref{Approximation_residual_F}, and let
\begin{equation}
R\left(A^T A+R\right)^{-1}=P \operatorname{diag}\left\{\lambda_{\max }, \ldots, \lambda_{\min }\right\} P^{-1},
\end{equation}
then one has
\begin{equation}
\begin{aligned}
& \left(I-R\left(A^T A+R\right)^{-1}\right)^{-1} \\
& =P \operatorname{diag}\left\{\frac{1}{1-\lambda_{\max }}, \ldots, \frac{1}{1-\lambda_{\min }}\right\} P^{-1}.
\end{aligned}
\end{equation}
Note that $\rho\left(R\left(A^T A+R\right)^{-1}\right)<1$, then one has
\begin{equation}\label{Residual_bound_k0}
\frac{1}{1-\lambda_{\min }} I \preceq\left(I-R\left(A^T A+R\right)^{-1}\right)^{-1} \preceq \frac{1}{1-\lambda_{\max }} I.
\end{equation}
Multiplying $\left(R\left(A^T A+R\right)^{-1}\right)^{k+1}$ on all sides of equation \eqref{Residual_bound_k0}, then one obtain equation \eqref{Residual_lower_upper_bound}. From equation \eqref{HR_solution_k0}, there always exist some regularization matrices that meet the spectral radius condition and ensure that the matrix $R\left(A^T A+R\right)^{-1}$ is symmetric.
\end{proof}
It follows that, for $R \in \mathrm{S}_{+}^n, A^T A \in \mathrm{S}_{++}^n$, the new PD matrix $(A^T A +R)$ is a convex function with respect to the regularization matrix $R$, and if $\left(R\left(A^T A+R\right)^{-1}\right) \in \mathrm{S}_{+}^n$, then $o^{k+1}\left(R\left(A^T A+R\right)^{-1}\right)$ is a convex function with respect to the matrix $R\left(A^T A+R\right)^{-1}$. Note that $(A^T A+R)$ and $R\left(A^T A+R\right)^{-1}$ are simultaneously diagonalizable via congruence by some nonsingular matrix. By some mild relaxations\cite{fuhry2012new}, it is possible to derive equivalent convex problems to optimization problems constructed by these simultaneously diagonalizable constraints \cite{ben2014hidden}, which are easy to solve.

As the approximation matrix inverse is used to approximate the inverse of $A^T A$ in a regularization solution, the upper bound in Theorem \ref{Theorem_1} indicates that the approximation residual is bounded for the regularization solution, while the lower bound shows a computable bias of the regularization solution. The bounds here also suggest that, for a regularization method, the difference between the approximate problem (\ref{TR_ap_problem_min}) and the original problem (\ref{problem_min_1}) is finite and computable.

\subsection{The a priori criterion to improve the numerical stability of the ill-conditioned problem}
We propose an \emph{a priori} criterion to obtain the optimal regularization matrix. The proposed criterion improves the numerical stability of calculating the solution of problem \eqref{problem_min_1}, which ensures a numerically stable behavior of the proposed method. The objective function of the proposed criterion is given as follows
\begin{equation}\label{Priori_criterion}
\begin{aligned} 
& \min _{k \in \mathbb{N}, R \in \mathbb{R}^{n\times n}} f_0(k, R) \\
& =\min _{k \in \mathbb{N}, R \in \mathbb{R}^{n\times n}} 
\begin{array}{l}
\bigg\{\left\|\sum_{i=k+1}^{\infty}\left(R\left(A^T A+R\right)^{-1}\right)^i\right\| \\
 \quad + \left\|\left(A^T A+R\right)\right\|\left\|\left(A^T A+R\right)^{-1}\right\| \bigg\}\\
\end{array} 
\\
& \quad\quad\quad \quad\quad\quad\quad\text { s. t. } \quad \rho\left(R\left(A^T A+R\right)^{-1}\right)<1.\\
\end{aligned}
\end{equation}

The first term of the objective function is the approximation residual of $\left(I-R\left(A^T A+R\right)^{-1}\right)^{-1}$, which is the reason for the estimation bias of the high-order regularization method, and the second term is the condition number of the new matrix $A^T A + R$, which describes the numerical stability of the problem. Note that, if $R \in \mathrm{S}_{+}^n$, then the new matrix $A^T A + R$ is a PD matrix, and according to Lemma \ref{Lemma_2}, the spectral radius condition is always satisfied. The proposed criterion considers the trade-off between modifying the small eigenvalues and keeping the information of some dimensions with respect to these small eigenvalues. The proposed \emph{a priori} criterion \eqref{Priori_criterion} is determined by the known matrix $A$ before any estimation of the solution of the problem, taking into account the trade-off between the estimation bias of the high-order regularization method and the improvement in the numerical stability of the approximation solution of the problem \eqref{problem_min_1}, which is achieved by using a new matrix $\left(A^T A+R\right)$ with a small condition number. For any problem described by \eqref{problem_min_1}, the proposed criterion \eqref{Priori_criterion} can be used to find the new (PD) matrix $\left(A^T A+R\right)$ with a small condition number to calculate the approximation solution of the problem.

To determine the regularization matrix $R$, similar to the truncated singular value decomposition (TSVD) method, which is found in \cite{fuhry2012new}, one way to improve the performance of the HR method is to choose some regularization matrix $R$ that only modifies the smallest eigenvalue (or some small eigenvalues) of $(A^T A)$. The regularization matrix is
\begin{equation} \label{Relaxation_R}
R=P \operatorname{diag}\left\{\lambda_{R, 1}, \ldots, \lambda_{R, n}\right\} P^{-1} \in \mathbb{R}^{n \times n},
\end{equation}
where $\lambda_{R, i}, i=1, \ldots, n$ are eigenvalues of the regularization matrix $R$, and $P$ is a unitary matrix whose columns are the eigenvectors of matrix $A^T A$.
\begin{lemma} \label{lemma_relaxation}
For a regularization matrix defined in \eqref{Relaxation_R}, if $\lambda_{R, i}=\max \left\{\mu^2-\lambda_i, 0\right\}, i=1, \ldots, n$ where $0<\lambda_{i+1} \leq \lambda_i, i=1, \ldots, n-1$ are the eigenvalues of the matrix $A^T A$, and $\lambda_{s+1} \leq \mu^2 \leq \lambda_s, 0 \leq s \leq n-1$, then the regularization matrix only modifies some small eigenvalues of $A^T A$ that are smaller than $\lambda_s$; if $\mu^2=\lambda_s$, the new matrix $A^T A+R$ is given as
\begin{equation}\label{New_system_matrix}
A^T A+R=P \operatorname{diag}\left\{\lambda_1, \ldots, \lambda_{s-1}, \lambda_s, \ldots, \lambda_s\right\} P^{-1},
\end{equation}
where $A^T A+R$ is a positive definite matrix, $P$ is a unitary matrix whose columns are the corresponding eigenvectors of matrix $A^T A$, and the regularization matrix $R$ of equation \eqref{New_system_matrix} satisfies the spectral radius condition.
\end{lemma}

\begin{proof}
From equations \eqref{Relaxation_R} and \eqref{New_system_matrix}, the given $R$ and $A^T A$ commute, then they are simultaneously diagonalizable by some unitary matrix $P$. Let
\begin{equation}
A^T A=P \operatorname{diag}\left\{\lambda_1, \ldots, \lambda_s, \lambda_{s+1}, \ldots, \lambda_n\right\} P^{-1},
\end{equation}
and
\begin{equation}
R=P \operatorname{diag}\left\{\max \left\{\mu^2-\lambda_1, 0\right\}, \ldots, \max \left\{\mu^2-\lambda_n, 0\right\}\right\} P^{-1},
\end{equation}
note that $\lambda_{s+1} \leq \mu^2 \leq \lambda_s$, then one obtains the new PD matrix
\begin{equation}
\begin{aligned}
A^T A+R =P \operatorname{diag}\left\{\lambda_1, \ldots, \lambda_s, \mu^2, \ldots, \mu^2\right\} P^{-1}. \\
\end{aligned}
\end{equation}

Note that $0<\lambda_i$ and $\lambda_{s+1} \leq \mu^2 \leq \lambda_s$, then the condition $\rho\left(R\left(A^T A+R\right)^{-1}\right)<1$ is always satisfied.

When choosing $\mu^2 = \lambda_s$, we complete the proof. 
\end{proof}

As a consequence of Lemma \ref{lemma_relaxation}, the high-order regularization improves the numerical stability of the solution for ill-conditioned problems when the regularization matrix is chosen according to the equation \eqref{New_system_matrix}. The numerical stability is enhanced as the condition number of the new matrix $A^TA+R$ is smaller than the condition number of the matrix $A^TA$ when ${\lambda_s}>{\lambda_n}$, and the new matrix $A^TA+R$ has a condition number $\frac{\lambda_1}{\lambda_s}$ with respect to the spectral norm while the condition number of the matrix $A^TA$ is $\frac{\lambda_1}{\lambda_n}$. The regularization matrix $R$ is a function of the variables $\mu$ and $s$, and hence, if $s$ is determined, then the regularization matrix $R$ is a function only with respect to $\mu$. For many robot localization problems, $s$ is able to be determined by identifying the variables that significantly affect the ill-conditioned characteristics of the problem (\ref{problem_1}) or by calculating the eigenvalues of the matrix $A^TA$.

\begin{remark}
For some large-scale ill-conditioned problems, the number of variables is very large, and the determination of $s$ becomes difficult. A general method is to set $s=1$, and then check if the result is acceptable. If the solution is not acceptable, then increase $s$ gradually. A more promising method to determine $s$ is to perform the principal component analysis.
\end{remark}

This restructure \eqref{Relaxation_R} of $R$ provides a more explainable and reasonable regularization for the ill-conditioned problem as it only changes some small eigenvalues (or the smallest eigenvalue) of the matrix $A^T A$. Some methods \cite{fuhry2012new, yang2015modified, cui2020special} for improving the SVD method are also applicable to select the regularization matrix in the proposed high-order regularization method after minor modifications. The regularization matrix under the restructure \eqref{Relaxation_R} makes sure the conditions in the previous theorems of this work are satisfied. However, replacing the small eigenvalues with some $\lambda_s$ directly leads to losing the information on dimensions with respect to these small eigenvalues. Therefore, it is better to consider a balance between modifying the small eigenvalues and keeping the information of some dimensions with respect to these small eigenvalues while choosing a regularization matrix.

Instead of calculating the loss function directly, it is better to modify the loss function \eqref{Priori_criterion} slightly by adding some reasonable restructures on the regularization matrix $R$ according to the above discussion and solve the loss function more easily. The simplified loss function is
\begin{equation}\label{Loss_function_relaxation}
\begin{aligned}
& \min _{s, k \in \mathbb{N} , \mu \in \mathbb{R} , R \in \mathrm{S}_{+}^n} f_1 (s, k, \mu, R)\\
& = \min _{s, k \in \mathbb{N} , \mu \in \mathbb{R} , R \in \mathrm{S}_{+}^n} \Bigg\{\left\|\sum_{i=k+1}^{\infty}\left(R\left(A^T A+R\right)^{-1}\right)^i\right\| \\
& \quad\quad\quad\quad\quad\quad\quad\quad +\left\|\left(A^T A+R\right)\right\|\left\|\left(A^T A+R\right)^{-1}\right\| \Bigg\}, \\
& \text { s. t. } \left\{ \begin{array}{l}
A^T A=P \operatorname{diag}\left\{\lambda_1, \ldots, \lambda_s, \lambda_{s+1}, \ldots, \lambda_n\right\} P^{-1}, \\
R=P \operatorname{diag}\left\{\lambda_{R, 1}, \ldots, \lambda_{R, n}\right\} P^{-1},\\
\lambda_{R, i}=\max \left\{\mu^2-\lambda_i, 0\right\}, i=1, \ldots, n, \\
\lambda_{s+1} \leq \mu^2 \leq \lambda_{s}, \\
s \leq n=\operatorname{rank}\left(A^T A\right),
\end{array}\right.
\end{aligned}
\end{equation}
where the regularization matrix has a special constraint. From equation \eqref{New_system_matrix}, under these constraints of $R$, it allows the regularization matrix to modify only some small eigenvalues. In general, if considering only modifying the smallest eigenvalue of the matrix $A^T A$, one can calculate the loss function by setting $s=n-1,k=1$. Note that the regularization matrix $R$ is related to $\mu^2$, then only $\mu^2$ needs to be determined among the parameters in the prior criterion.

\begin{theorem} \label{Theorem_3}
For $s=n-1, k=1$, the loss function described in equation 
\eqref{Loss_function_relaxation} is simplified as
\begin{equation} \label{Loss_function_mu_k1}
\begin{aligned}
&\min _{\mu^2 \in \mathbb{R}} f\left(\mu^2\right)=\min _{\mu^2 \in \mathbb{R}}\left\{\frac{\left(\mu^2-\lambda_n\right)^2}{\lambda_n \mu^2}+\frac{\lambda_1}{\mu^2}\right\}, \\
& \text { s. t. } \lambda_n \leq \mu^2 \leq \lambda_{n-1}, 0<\lambda_i,
\end{aligned}
\end{equation}
where $f\left(\mu^2\right)$ is a convex function with respect to $\mu^2$, and the loss function \eqref{Loss_function_mu_k1} has a unique minimum at
\begin{equation}
\mu^2= \min \left\{\sqrt{\lambda_n^2+\lambda_n \lambda_1},\lambda_{n-1}\right\}.
\end{equation}
\end{theorem}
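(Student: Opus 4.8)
The plan is to first collapse the constrained loss function in \eqref{Loss_function_relaxation} to the stated one-variable form by exploiting the relaxation constraints with $s=n-1$, and then to carry out a routine one-dimensional convex minimization over the interval $[\lambda_n,\lambda_{n-1}]$. First I would substitute $s=n-1$ into the constraints. Since the eigenvalues obey $\lambda_1 \ge \cdots \ge \lambda_{n-1} \ge \mu^2 \ge \lambda_n > 0$, the rule $\lambda_{R,i}=\max\{\mu^2-\lambda_i,0\}$ forces $\lambda_{R,i}=0$ for every $i<n$ and $\lambda_{R,n}=\mu^2-\lambda_n$. Working in the common eigenbasis $P$, this yields $A^TA+R=P\operatorname{diag}\{\lambda_1,\ldots,\lambda_{n-1},\mu^2\}P^{-1}$ and hence $R(A^TA+R)^{-1}=P\operatorname{diag}\{0,\ldots,0,\gamma\}P^{-1}$ with a single nonzero eigenvalue $\gamma=(\mu^2-\lambda_n)/\mu^2\in[0,1)$.

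With this diagonal form, both terms of the objective become explicit. For the first term (the approximation residual with $k=1$), the matrix $\sum_{i=2}^{\infty}\left(R(A^TA+R)^{-1}\right)^i$ is symmetric positive semidefinite with a single nonzero eigenvalue $\sum_{i=2}^{\infty}\gamma^i=\gamma^2/(1-\gamma)$; since $P$ is unitary its $2$-norm equals this largest eigenvalue, which simplifies to $(\mu^2-\lambda_n)^2/(\lambda_n\mu^2)$. For the second term, the spectral-norm condition number of the SPD matrix $A^TA+R$ is the ratio of its largest eigenvalue $\lambda_1$ to its smallest eigenvalue $\mu^2$, namely $\lambda_1/\mu^2$. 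Adding the two recovers the simplified loss \eqref{Loss_function_mu_k1}.

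The remaining step is the minimization. Setting $t=\mu^2$ and expanding the square, I would rewrite $f(t)=t/\lambda_n-2+(\lambda_1+\lambda_n)/t$, from which $f''(t)=2(\lambda_1+\lambda_n)/t^3>0$ for $t>0$ establishes strict convexity, hence uniqueness of the minimizer. Solving $f'(t)=1/\lambda_n-(\lambda_1+\lambda_n)/t^2=0$ gives the unconstrained stationary point $t^\star=\sqrt{\lambda_n^2+\lambda_n\lambda_1}$. One checks $t^\star>\lambda_n$ always, because $\lambda_1>0$, so the lower endpoint of the feasible interval is never active; by convexity $f$ is strictly decreasing on $[\lambda_n,t^\star]$, so whenever $t^\star$ exceeds the upper bound $\lambda_{n-1}$ the constrained minimum is attained at $\lambda_{n-1}$. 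Combining the two cases gives $\mu^2=\min\{\sqrt{\lambda_n^2+\lambda_n\lambda_1},\lambda_{n-1}\}$, as claimed.

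I expect the only delicate point to be the evaluation of the first norm term: one must observe that, under the relaxation, $R(A^TA+R)^{-1}$ is symmetric positive semidefinite with exactly one nonzero eigenvalue, which is precisely where Proposition \ref{Proposition_1} and the unitarity of $P$ are needed so that the $2$-norm of the tail series collapses to the scalar geometric sum $\gamma^2/(1-\gamma)$. Once that reduction is in place, everything else is elementary single-variable calculus together with a straightforward projection of the stationary point onto the feasible interval.
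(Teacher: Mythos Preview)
Your proposal is correct and follows essentially the same approach as the paper: reduce both terms of the objective via the shared eigenbasis~$P$ (obtaining the single nonzero eigenvalue $\gamma=(\mu^2-\lambda_n)/\mu^2$ and the condition number $\lambda_1/\mu^2$), then minimize the resulting one-variable convex function over $[\lambda_n,\lambda_{n-1}]$. Your treatment of the minimization step---the explicit rewriting $f(t)=t/\lambda_n-2+(\lambda_1+\lambda_n)/t$, the derivative computation, and the constraint-projection argument---is in fact more detailed than the paper's, which simply asserts convexity and states the minimizer.
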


\begin{proof}
From equation \eqref{Loss_function_relaxation}, if $s=n-1$, one has
\begin{equation}
\begin{aligned}
\left(A^T A+R\right) &=P \operatorname{diag}\left\{\tilde{\lambda}_1, \ldots, \tilde{\lambda}_n\right\} P^{-1}\\
&=P \operatorname{diag}\left\{\lambda_1, \ldots, \lambda_{n-1}, \mu^2\right\} P^{-1},
\end{aligned}
\end{equation}
where $\tilde{\lambda}_i=\max \left\{\lambda_i, \mu^2\right\}, i=1, \ldots, n$ is the eigenvalue of the matrix $\left(A^T A+R\right), 0<\tilde{\lambda}_i$, and $P$ is some unitary matrix.

The condition number of the special new PD matrix $A^T A+R$ is
\begin{equation} \label{new_condition_number}
\begin{aligned}
\kappa\left(A^T A+R\right) &=\left\|\left(A^T A+R\right)\right\|\left\|\left(A^T A+R\right)^{-1}\right\| \\
&=\frac{\tilde{\lambda}_1}{\tilde{\lambda}_n}=\frac{\lambda_1}{\mu^2}.
\end{aligned}
\end{equation}

For the residual, if $k=1$, one has
\begin{equation}
\begin{aligned}
& \sum_{i=2}^{\infty}\left(R\left(A^T A+R\right)^{-1}\right)^i \\
& =\left(I-R\left(A^T A+R\right)^{-1}\right)^{-1}\left(R\left(A^T A+R\right)^{-1}\right)^2. \\
\end{aligned}
\end{equation}
For the factors in the first term of the equation \eqref{Loss_function_relaxation}, the matrix power series is rewritten as
\begin{equation}
\begin{aligned}
R\left(A^T A+R\right)^{-1}=P\left(\operatorname{diag}\left\{0, \ldots, 0, \frac{\mu^2-\lambda_n}{\mu^2}\right\}\right) P^{-1},
\end{aligned}
\end{equation}
and one has
\begin{equation}\label{eq_inverse_IRAAR}
\begin{aligned}
\left(I-R\left(A^T A+R\right)^{-1}\right)^{-1}=P\left(\operatorname{diag}\left\{1, \ldots, 1, \frac{\mu^2}{\lambda_n}\right\}\right) P^{-1}. \\
\end{aligned}
\end{equation}
Hence, for the first term in the loss function \eqref{Loss_function_relaxation}, one has
\begin{equation} \label{residual_k1}
\begin{aligned}
\left\|\sum_{i=2}^{\infty}\left(R\left(A^T A+R\right)^{-1}\right)^i\right\| =\frac{\left(\mu^2-\lambda_n\right)^2}{\lambda_n \mu^2}.
\end{aligned}
\end{equation}
Substitute equations \eqref{new_condition_number} and \eqref{residual_k1} into the equation 
\eqref{Loss_function_relaxation}, then one has the loss function \eqref{Loss_function_mu_k1}. 

Consider the second derivative of $f\left(\mu^2\right)$, then it follows that $f\left(\mu^2\right)$ is a convex function with respect to $\mu^2$. Hence, for the loss function \eqref{Loss_function_mu_k1}, there must be a unique minimum for this function. It follows that
\begin{equation}
\mu^2= \min \left\{\sqrt{\lambda_n^2+\lambda_n \lambda_1},\lambda_{n-1}\right\},
\end{equation}
which is the solution for equation \eqref{Loss_function_mu_k1}. 
\end{proof}
Similarly, for $s=n-1,k=0$, the loss function \eqref{Loss_function_relaxation} is simplified as
\begin{equation}
\begin{aligned}
& \min _{\mu^2 \in \mathbb{R}} f\left(\mu^2\right)=\min _{\mu^2 \in \mathbb{R}}\left\{\frac{\mu^2-\lambda_n}{\lambda_n}+\frac{\lambda_1}{\mu^2}\right\},\\
&\text { s.t. } \lambda_n \leq \mu^2 \leq \lambda_{n-1}, 0<\lambda_i.
\end{aligned}
\end{equation}
Then, the corresponding optimal regularization parameter is

\begin{equation} \label{miu2_k0}
\mu^2 = \left\{ \begin{array}{ll}
\min \left\{\sqrt{\frac{2 \lambda_1}{\lambda_n}},\lambda_{n-1}\right\}, & \text{if }\sqrt{\frac{2 \lambda_1}{\lambda_n}} \geq \lambda_n\\
\max \left\{\sqrt{\frac{2 \lambda_1}{\lambda_n}},\lambda_{n}\right\}, & \text{if }\sqrt{\frac{2 \lambda_1}{\lambda_n}} \leq \lambda_{n-1}
\end{array}\right.
\end{equation}
which provides the optimal regularization matrix $R$ of the TR method under the proposed \emph{a priori} criterion.

Additionally, for any given $k\in \mathbb{N}, s=n-1$, the loss function is a convex function with respect to $\mu^2$, then there must be a unique minimum for the loss function \eqref{Loss_function_relaxation}, and the optimal regularization matrix always exists (see \textbf{Appendix D}).

For robot localization problems, Theorem \ref{Theorem_3} provides a direct way to calculate the regularization matrix $R$ rather than an empirical method in most applications of regularization methods.

\subsection{Improved regularization solution}
In this section, we propose two techniques to improve the proposed HR method for some applications in robot problems. In most robot applications, such as mobile robot localization \cite{wang2017ultra,qin2018vins} and calibration of LiDAR-IMU Systems \cite{lv2022observability}, the estimation of the solution includes a multiple estimation process using a number of measurements. It is known that the LS solution is an unbiased estimator for variables with zero mean white noise, and from equation \eqref{Bias_LS_HR}, the difference between the LS solution and the high-order solution is computable in every calculation. Hence, one can calculate the bias of the HR method by calculating the difference between the LS solution and the HR solution. As matrix $A^T A$  is nonsingular, and if one has $l$ measurements of $b$, which are a significant amount of the data with zero mean white noise, then one can calculate the bias of the HR method as
\begin{equation}
\begin{aligned}
\Delta x & =\frac{1}{l} \sum_{t=1}^l\left(\hat{x}_{hr, t}-x_t^*\right) \\
&\approx \frac{1}{l} \sum_{t=1}^l\left(\hat{x}_{hr, t}-x_t^*\right)-\frac{1}{l} \sum_{t=1}^l\left(\hat{x}_{ls, t}-x_t^*\right) \\
& =\frac{1}{l} \sum_{t=1}^l\left(\hat{x}_{hr, t}-\hat{x}_{ls, t}\right),
\end{aligned}
\end{equation}
where $\hat{x}_{hr, t}$ is the $t$th calculation result of the HR method using the $t$th measurement of $b, \hat{x}_{ls, t}$ is the $t$th calculation result of the LS method, and $x^*$ is the true value. The expression works because the LS solution is an unbiased estimator for a random variable with zero mean white noise, and the difference between the LS solution and the high-order solution is computable. Then a bias-corrected HR solution is given by
\begin{equation}
\begin{aligned}
\hat{x}^k_{hr,l}=\left(A^T A+R\right)^{-1} \sum_{i=0}^k\left(R\left(A^T A+R\right)^{-1}\right)^i A^T b-\Delta x .
\end{aligned}
\end{equation}

One equivalent way to calculate the bias of the high-order regularization solution is calculating the approximation residual \eqref{Approximation_residual_F}. However, as found in equation \eqref{eq_inverse_IRAAR}, if the problem is ill-conditioned, then basically, the matrix $\left(I-R\left(A^T A+R\right)^{-1}\right)^{-1}$ is ill-conditioned. Hence, it is not able to improve the estimation performance of the high-order regularization method in one calculation. Some similar bias-correction methods are found in \cite{ji2022adaptive}. Note that this bias correction method requires a significant amount of data to estimate the bias.

Let $R \in \mathrm{S}_{+}^n$, according to Proposition \ref{Proposition_1}, $R\left(A^T A+R\right)^{-1}$ is symmetric positive semidefinite. From equation \eqref{Residual_lower_upper_bound}, if the matrix $R\left(A^T A+R\right)^{-1}$ is symmetric, then there is an adjustment parameter $\omega$ such that
\begin{equation} \label{find_residual_approximation}
\min _{\lambda_{\min } \leq \omega \leq \lambda_{\max }} \left\| G \left( \omega \right) \right\|_F^2, \text{if }R \in \mathrm{S}_{+}^n ,
\end{equation}
where 
\begin{equation}
\begin{aligned}
&G \left( \omega \right)\\
&=F\left(R\left(A^T A+R\right)^{-1}\right)-\frac{1}{1-\omega}\left(R\left(A^T A+R\right)^{-1}\right)^{k+1},
\end{aligned}
\end{equation}
which shows that the residual is approximated by a convex function with respect to the PSD matrix $R\left(A^T A+R\right)^{-1}$. Here $\| \cdot \|_F$ is the Frobenius norm, and $\lambda_{\max }$ and $\lambda_{\min }$ are the maximum and the minimum eigenvalues of matrix $R\left(A^T A+R\right)^{-1}$. As discussed in Section \ref{bounds_approximation_residual}, the approximation residual has two bounds, both relating to the factor $\left(R\left(A^T A+R\right)^{-1}\right)^{k+1}$. Hence, one can find a matrix \cite{horn2012matrix, fuhry2012new} with the factor $\left(R\left(A^T A+R\right)^{-1}\right)^{k+1}$ to approximate the residual according to \eqref{find_residual_approximation}.

As a possible improvement, if the performance of the high-order regularization method for a given $k$ still needs to be improved, from equation \eqref{Residual_lower_upper_bound}, one way to improve it is to calculate the following solution
\begin{equation}
\begin{aligned}
&\hat{x}^k_{hr,\omega}=\\
&\left(A^T A+R\right)^{-1} \sum_{i=0}^k\left(R\left(A^T A+R\right)^{-1}\right)^i A^T b\\
& +\frac{1}{1-\omega} \left(A^T A+R\right)^{-1} \left(R\left(A^T A+R\right)^{-1}\right)^{k+1} A^T b, \\
&0 \leq \omega \leq \lambda_{\max },
\end{aligned}
\end{equation}
which is an improved solution of the high-order solution. For $k=0$, one has
\begin{equation}
\hat{x}^0_{hr,\omega}=\frac{1}{1-\omega}\left(A^T A+R\right)^{-1} A^T b, 0 \leq \omega \leq \lambda_{\max },
\end{equation}
where $\omega$ is an adjustment parameter based on the performance of the $k$th-order solution. To determine the adjustment parameter, some experimental approaches introduced in \cite{fuhry2012new, hansen1993use} are helpful. There are some researches that introduce a similar adjustment parameter for each diagonal item of the $A^TA$ \cite{yang2015modified, cui2020special}. However, for robot localization problems, if the performance of the high-order regularization is sufficient, for the sake of simplicity, the suggested adjustment parameter is the maximum eigenvalue $\lambda_{\max}$, the minimum eigenvalue $\lambda_{\min}$ or 0.

\section{Robot localization in 3D environment} \label{robot_localization_sliding_window}
The restricted mobile robot localization problems using sensor networks are often found to be ill-conditioned inverse problems, especially when the trajectory of the robot is random and the placement of the sensors is limited.
\begin{figure}[!ht]
\centering
\includegraphics[width=3.5in]{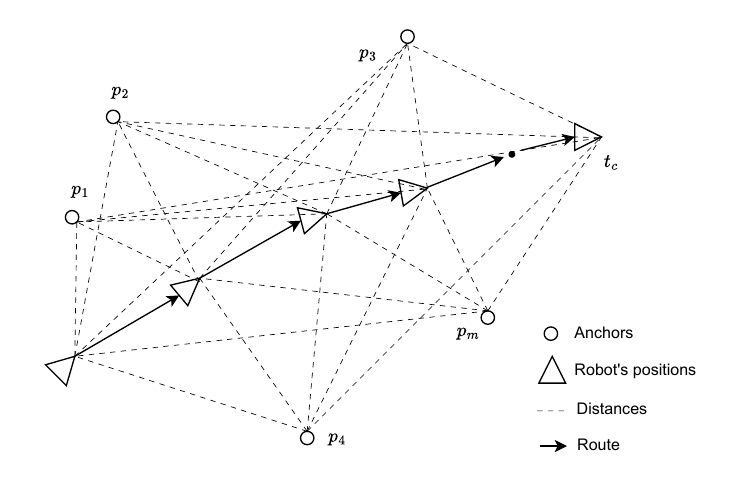}
\caption{Illustration of robot localization problem using anchors.}
\label{robot_localization_using_anchors}
\end{figure}
A robot localization process is shown in Fig. \ref{robot_localization_using_anchors}. In Fig. \ref{robot_localization_using_anchors}, a robot moves continuously along a route. The distances between anchors and the robot are given by a tag, which is a distance measurement sensor equipped with the robot. The positions of the robot in different instants are obtained by using distance measurements. Considering the distance $d_i$ between the robot and the $i$th anchor, one has
\begin{equation}
d_i^2=(x_i-x_r)^2+(y_i-y_r)^2+(z_i-z_r)^2,i=1,\cdots,m+1,
\end{equation} where $\left(x_r, y_r, z_r\right)^T$ is the position of the mobile robot and $\left(x_i, y_i, z_i\right)^T, i=1, \ldots, m+1$ are the positions of $m+1$ anchors with known given positions.
Subtracting $d_i^2 (i \neq m+1)$ and $d_{m+1}^2$, the mobile robot localization problem is formulated into the form as \eqref{problem_1} with
\begin{equation} \label{localization_expression}
\begin{aligned}
A &=\left[\begin{array}{c}
\left(p_1-p_{m+1}\right)^T \\
\vdots \\
\left(p_m-p_{m+1}\right)^T
\end{array}\right], \\
b &=\frac{1}{2}\left[\begin{array}{c}
p_1^T p_1-p_{m+1}^T p_{m+1}+d_{m+1}^2-d_1^2 \\
\vdots \\
p_m^T p_m-p_{m+1}^T p_{m+1}+d_{m+1}^2-d_m^2
\end{array}\right],
\end{aligned}
\end{equation}
where $x=\left(x_r, y_r, z_r\right)^T$ is the position of the mobile robot,
$A \in \mathbb{R}^{m \times 3}$, and  $p_i=\left(x_i, y_i, z_i\right)^T, i=1, \ldots, m+1$. To simplify, take $b_i = p^T_ip_i - p^T_{m+1}p_{m+1} + d_{m+1}^2 - d_i^2 + \epsilon_i$, where $\epsilon_i$ is a zero mean white noise. Then one has $Ax-b=\epsilon$ (see \cite{sayed2005network} for more details). Note that the normal matrix $A^T A$ is expressed as
\begin{equation} \label{ATA_D_xyz}
\begin{aligned}
A^T A &=\sum_{i=1}^m \left(p_i-p_{m+1}\right) \left(p_i-p_{m+1}\right)^T \\
&=\left[\begin{array}{ccc}
\Delta x_a^T \Delta x_a  \quad \Delta x_a^T \Delta y_a  \quad \Delta x_a^T \Delta z_a \\
\Delta y_a^T \Delta x_a  \quad \Delta y_a^T \Delta y_a  \quad \Delta y_a^T \Delta z_a \\
\Delta z_a^T \Delta x_a  \quad \Delta z_a^T \Delta y_a  \quad \Delta z_a^T \Delta z_a  
\end{array}\right],
\end{aligned}
\end{equation}
where \\
\begin{equation}
\begin{aligned}
 &\Delta x_a= \left(x_1-x_{m+1}, \quad\dots, \quad x_m-x_{m+1}\right)^T, \\
 &\Delta y_a= \left(y_1-y_{m+1}, \quad\dots, \quad y_m-y_{m+1}\right)^T,\\
 &\Delta z_a= \left(z_1-z_{m+1}, \quad\dots, \quad z_m-x_{m+1}\right)^T,
\end{aligned}
\end{equation}
and 
$A = \left[\Delta x_a, \Delta y_a,\Delta z_a \right]$. From equation \eqref{ATA_D_xyz}, if some column of $A$ is much smaller than the other columns, for example, $\Delta z_a$ is much smaller than the other two columns, there are some small entries in $A^T A$, which causes the smallest eigenvalue of $A^T A$. This situation often arises in indoor robot localization problems as the height of a room and the width of a narrow corridor in a building are fixed and much smaller than the value in the other dimensions. In these cases, the localization problems become ill-conditioned problems. Therefore, the position estimation of the robot is better calculated using the high-order regularization method.

\begin{algorithm}[!ht]
\caption{Bias-correction of the HR solution with a sliding window.}\label{alg:sliding_window_bias}
\begin{algorithmic}
\STATE 
\STATE {\textsc{Bias sliding window}}$(W)$ 
\STATE \hspace{0.5cm}$ \textbf{if } t_c-t_0 < l $
\STATE \hspace{0.5cm}$ W \gets \Delta x(t) = \hat{x}_{hr, t}-\hat{x}_{ls, t} $ \textbf{ for } $ t = t_0,...,t_c $
\STATE \hspace{0.5cm}$ \Delta x = 0 \text{ }(\text{or } \Delta x =\Delta x(t))$
\STATE \hspace{0.5cm}$ \textbf{else if } t_c-t_0 \geq l $
\STATE \hspace{0.5cm}$ W \gets \Delta x(t) =\hat{x}_{hr, t}-\hat{x}_{ls, t} $ \textbf{ for } $ t = t_c-l+1,...,t_c $
\STATE \hspace{0.5cm}$ \Delta x = \frac{1}{l} \sum \left(\Delta x(t)\right),\Delta x(t) \in W $
\STATE \hspace{0.5cm}\textbf{return}  $\Delta x $
\STATE 
\STATE {\textsc{HR Solution}}$(\hat{x}_{hr})$
\STATE \hspace{0.5cm}$ \hat{x}_{hr} \gets  \hat{x}_{hr} - \Delta x $
\STATE \hspace{0.5cm}\textbf{return}  $\hat{x}_{hr}$
\end{algorithmic}
\label{alg1}
\end{algorithm}
To obtain the bias of the HR solutions, one way is to calculate the mean of the difference between every HR solution and LS solution from the start of the estimation process $t=t_0$ to the current instant $t=t_c$. However, to make sure the estimation of the bias of HR solutions is as accurate as possible, we propose an algorithm based on a sliding window to calculate the difference between the HR solutions and LS solutions. By using the sliding window $W$, which includes $\Delta x(t), t = t_c-l+1,...,t_c $, we consider the mean of differences $\Delta x(t)$ between $l$ HR solutions and $l$ LS solutions from instant $t=t_c-l+1$ to the current instant $t=t_c$. Therefore, we only need to consider the latest $l$ measurements of $b$ and assume that in this time interval, the error of $b$ is a zero mean white noise and $l$ is large enough. The bias correction algorithm based on a sliding window is given as Algorithm \ref{alg:sliding_window_bias}.

A range of problems, whose observation set and variable set are used to construct nonlinear residual functions \cite{bertoni2022perceiving,qin2018vins,li2022diversified,han2023rda,dümbgen2023safe,dong2023trajectory,guizilini2022learning,lv2022observability}, are solved by the Gauss-Newton method. Consider that the object function of the problems constructed from the residual functions is described as
\begin{equation}\label{slam_residual_problem}
\min _{x} \frac{1}{2} \|r(x)\|^2_Q,
\end{equation}
where the variables $x$ are poses, positions of the robot, or any other parameters of interest, $\|x\|^2_Q= x^TQx$, and $Q$ is a weight matrix, normally the inverse of the covariance matrix associated with the measurement noise. By leveraging the Gauss-Newton method, the problems \eqref{slam_residual_problem} are solved by solving the following problem
\begin{equation}\label{slam_residual_problem_linearized}
\min _{x} \frac{1}{2} \|r(x)+ J \delta x\|^2_Q,
\end{equation}
which is simplified as
\begin{equation}\label{weight_problem_min}
J^TQJ \delta x = - J^TQ r(x),
\end{equation}
where $J$ is the Jacobian of $r(x)$ with respect to $x$, and $\delta x$ is the perturbation of $x$. By setting $A^TA=J^TQJ, b= -J^TQ r(x)$, this weighted least square problem is described in a similar form to the problem \eqref{problem_1} and solved after obtaining $\delta x$. The problem is ill-conditioned if $J^TQJ$ is an ill-conditioned matrix, which generally implies that $J$ is ill-conditioned. One method to calculate a new covariance matrix is to recover it with the matrix inverse approximation $(A^TA+R)^{-1} \sum_{i=0}^{k} \left( R(A^TA+R)^{-1}\right)^i$, similar to LS and TR (see \textbf{Appendix E}). As these problems reduce to a form similar to the problem \eqref{problem_min_1}, we focus on analyzing the ill-conditioned situations and presenting our novel solution for solving ill-conditioned problems, using the example of robot localization with a sensor network described in this section.

\section{Simulations and Experiments} \label{simulation_experiment}
To demonstrate the performance of the proposed methods, we compare the results of the proposed HR method with the LS method and some regularization methods. These regularization methods include the TR method and Fuhry's Tikhonov regularization (FTR) method with the optimal regular parameter (OFTR). The OFTR method is an improved TR method and also a special case of HR with $\mu$ provided by the equation \eqref{miu2_k0}. As the FTR bridges TR and TSVD with a filter factor\cite{fuhry2012new}, the relationships between TSVD and the proposed HR are also found through such a similar filter factor. In addition, many of the modified Tikhonov regularizations\cite{yang2015modified,cui2020special,noschese2016some,mohammady2020extension} use a similar form to FTR only with a different filter factor. Hence, the relationships between these modified Tikhonov regularizations and the proposed HR method are discovered. Considering $s=n-1$, some special cases of the proposed HR method are shown in Table \ref{tab:hrm_special_cases}. As these special cases and TSVD are all the fundamental regularization methods and, more importantly, their regularization matrix or regularization parameters can be determined by \emph{a priori} criterion, i.e., existing a closed-form solution, we then compare the performance of the proposed HR method with such regularization methods. We also provide the optimal regularization parameter $\mu$ of the proposed HR method in the special case for $k=1$ in Table \ref{tab:hrm_special_cases}, which is used to compare with the other regularization methods in simulations and experiments.

\begin{table}[!ht]
\caption{Some special cases of the proposed HR method \label{tab:hrm_special_cases}}
\centering
\begin{tabular}{c|c|c|c}
\hline
Method & $k$     & $R$      & $\omega$\\
\hline
LS     & Inf     & 0        & -\\
\hline
TR & 0 & $R = \mu ^2 I$  & 0\\
\hline
FTR     & 0  & $\lambda_{R,n}= \max \left\{\mu^2-\lambda_{n},0 \right\}$ & 0\\
\hline
OFTR    & 0  & $\mu^2=(2\lambda_1 /\lambda_n)^{1/2}$  & 0\\
\hline
HR  & $\geq 0$  & $\mu^2=(\lambda_n^2+\lambda_n \lambda_1)^{1/2}(k=1)$  & 0 , $\lambda_{max}$, or $\lambda_{min}$\\
\hline
\end{tabular}
\end{table}

\subsection{Simulations}

To evaluate the results of different methods, we calculate the root mean square error (RMSE) for the position estimations and position estimations in different dimensions:

\begin{equation}
\text{RMSE}=\sqrt{\frac{1}{l} \sum_{t=1}^l{e_{t}^2}}, \,\, \text{RMSE}_z=\sqrt{\frac{1}{l} \sum_{t=1}^l{e_{z,t}^2}},
\end{equation}
where $e_{t} = \|\hat{x}_t-x_t^*\|$, $x_t = (x_{r,t},y_{r,t}, z_{r,t})$ is the $t$th position of robot, $\hat{x}_t$ and $x_t ^*$ are the concoresponding estimation and true value, $e_{z,t} = (\hat{z}_{r,t}-z_{r,t}^*)$ (same for $\text{RMSE}_x$ and $\text{RMSE}_y$).

The anchors for all simulations are located at (0, 0, 0), (6, 0, 0), (0, 5, 0), (3.5, 3, 0), and (3, 2.5, 0.5). We first create a 3D scenario for random point localization to show the performance of the proposed high-order method. In these simulations, we create 1000 points randomly representing the robot positions. We add zero mean random noise with $\sigma = 0.1 m$ to each distance measurement according to the results in \cite{jimenez2016comparing}, and then each entry of $b$ is with random noise. For this 3D localization problem, the results shown in Table \ref{tab:hr_results}, Table \ref{tab:unbias_hr_results} and Fig. \ref{new-TR_Ri-optimal_k1_unbias-j200_point} indicate that in terms of RMSE, the performance of the proposed HR method is significantly superior to the LS method, the TR method, and Fuhry's Tikhonov regularization (FTR) method with the optimal regularization parameter (OFTR), which is provided by the equation \eqref{miu2_k0}.

\begin{table}[!t]
\caption{Results of different methods (unit: meter)\label{tab:hr_results}}
\centering
\begin{tabular}{c|c|c|c|c}
\hline
Method & LS  & TR  & HR$_\omega$ & HR$_{\lambda}$ \\
\hline
 RMSE & 0.64331 & 0.42686 & 0.39173 & 0.32005\\
\hline
\end{tabular}
\end{table}

\begin{figure*}[!t]
\centering
\subfloat[]{\includegraphics[width=3.5in]{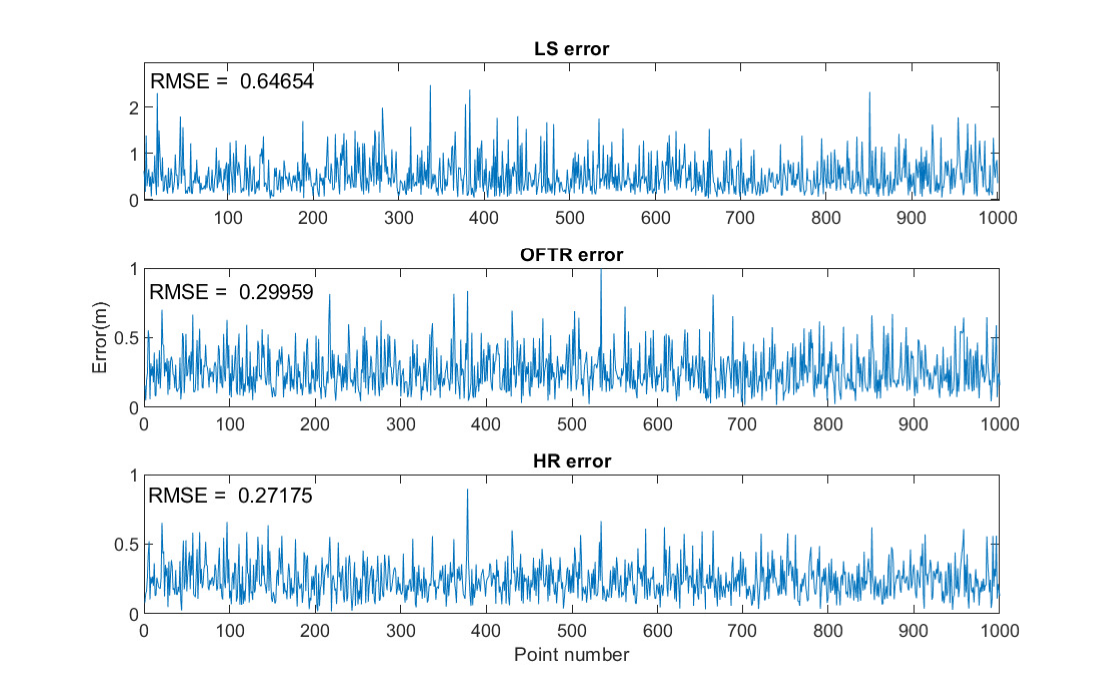}
\label{fig_Er_Ri_optimal_k1_unbias}}
\hfil
\subfloat[]{\includegraphics[width=3.5in]{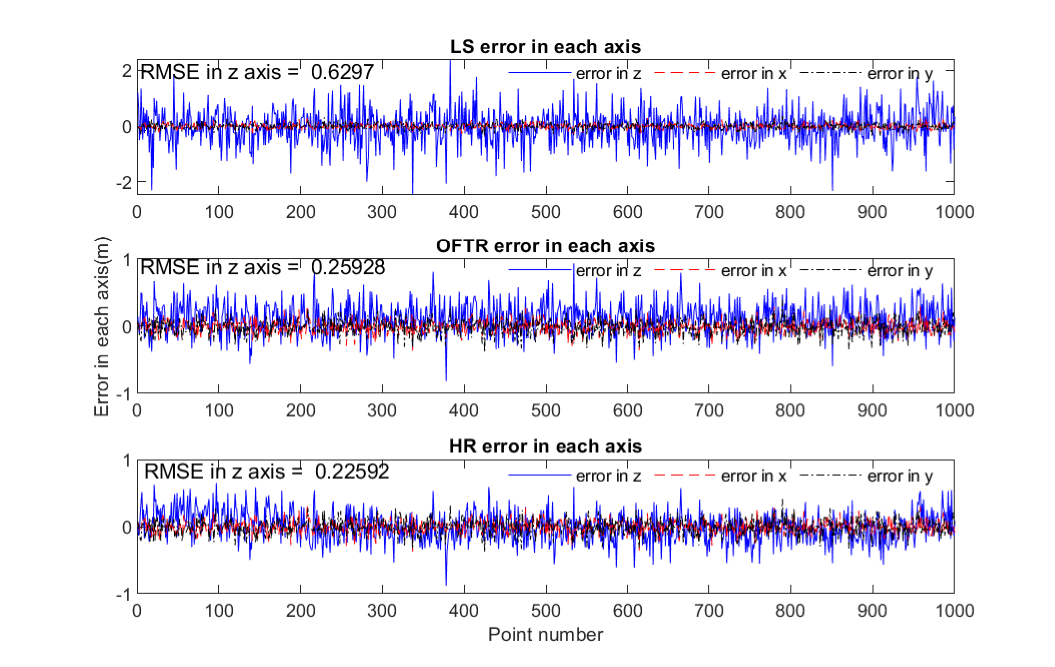}%
\label{fig_Er_Ri_optimal_k1_unbias_zErr}}
\caption{The performance of the high-order method with $R$ only changing the smallest eigenvalue of $A^TA$ and $\mu^2=(\lambda_n^2+\lambda_n \lambda_1)^{1/2} (s=n-1, k=1)$, the OFTR method and the LS method. The bias of the HR solution is corrected in real-time according to the LS solution. The result of the OFTR solution is provided by the FTR method whose regularization parameter $\mu^2=(2\lambda_1 /\lambda_n)^{1/2}$ is given by the \emph{a priori} criterion proposed in this paper for $s=n-1,k=0$, which only changes the smallest eigenvalue of the matrix $A^TA$. (a) Total RMSE. (b) RMSE in different dimensions.}
\label{new-TR_Ri-optimal_k1_unbias-j200_point}
\end{figure*}

\begin{table}[!t]
\caption{Regularization methods with optimal parameter (unit: meter)\label{tab:unbias_hr_results}}
\centering
\begin{tabular}{c|c|c}
\hline
Method & RMSE  & RMSE$_z$  \\
\hline
LS & 0.64654 & 0.62970 \\
\hline
OFTR($\mu^2=(2\lambda_1 /\lambda_n)^{1/2}$) & 0.29959 & 0.25928 \\
\hline
HR($\mu^2=(\lambda_n^2+\lambda_n \lambda_1)^{1/2}$) & \textbf{0.27175} & \textbf{0.22592} \\
\hline
\end{tabular}
\end{table}

In Table \ref{tab:hr_results}, the TR method is with the regularization matrix $R = \mu ^2 I$, the HR method with the same regularization matrix and an adjustment parameter $\omega = \lambda_{min}$ is denoted as HR$_{\omega}$, and the HR method with only changing the smallest eigenvalue of $A^TA$ with $\lambda_{R,n}=\lambda_{n-1}$ is denoted as HR$_\lambda$. From Table \ref{tab:hr_results}, the adjustment parameter and the restructure of the regularization matrix help to improve the behavior of regularization. It is known from the simulation results of Fig. \ref{new-TR_Ri-optimal_k1_unbias-j200_point} and Table \ref{tab:unbias_hr_results} that the high-order regularization method is superior to the OFTR method, which is the special case of the HR method for $k=0$, indicating the order $k$ of regularization benefits the overall behaviors of regularization. The estimation error in $z$ dimension also shows that the high-order regularization method with the proposed \emph{a priori} criterion finds a balance between modifying the small eigenvalues and keeping the information of the dimension with respect to the smallest eigenvalue as its performance in the $z$ dimension is improved.

We also perform simulation and calculate the normalized estimation error squared (NEES) to estimate the consistency of the proposed methods\cite{bar2004estimation}. The results in Table \ref{tab:nees_test} show that the proposed HR estimator with bias correction is consistent, similar to the LS estimator, as their NEES results are approximately equal to the degree of freedom of the variables. The $95\%$ confidence interval of NEES is $[2.8496,3.2428]$, and the results of LS and HR are within the interval. Since NEES is sensitive to the assumption of a standard normal distribution of the estimated errors, the results exhibit applicability in different methods, as observed in the NEES of OFTR, which provides a biased solution.

\begin{table}[!t]
\caption{Normalized estimation error squared\label{tab:nees_test}}
\centering
\begin{tabular}{c|c|c|c}
\hline
Method & LS  & OFTR  & HR \\
\hline
 NEES & 3.0171 & 20.085 & 2.9288 \\
\hline
\end{tabular}
\end{table}

To demonstrate the performance improvement of the high-order regularization method in the dimension ($z$) with respect to the smallest eigenvalue of $A^TA$ and the trade-off capability of the proposed optimal regularization matrix for the estimation bias, over-smoothness and the numerical stability, we create a 3D scenario in which the robot moves along a given route shown in Fig. \ref{given_route}. In this scenario, the robot moves along a given route. The given route consists of two main parts. In the first part, the robot moves on a horizontal plane, and in the second part of the movement, the robot moves on a slope. We also add zero mean random noise with $\sigma = 0.1 m$ to each distance measurement as in the previous simulations.

\begin{figure}[!t]
\centering
\includegraphics[width=3.5in]{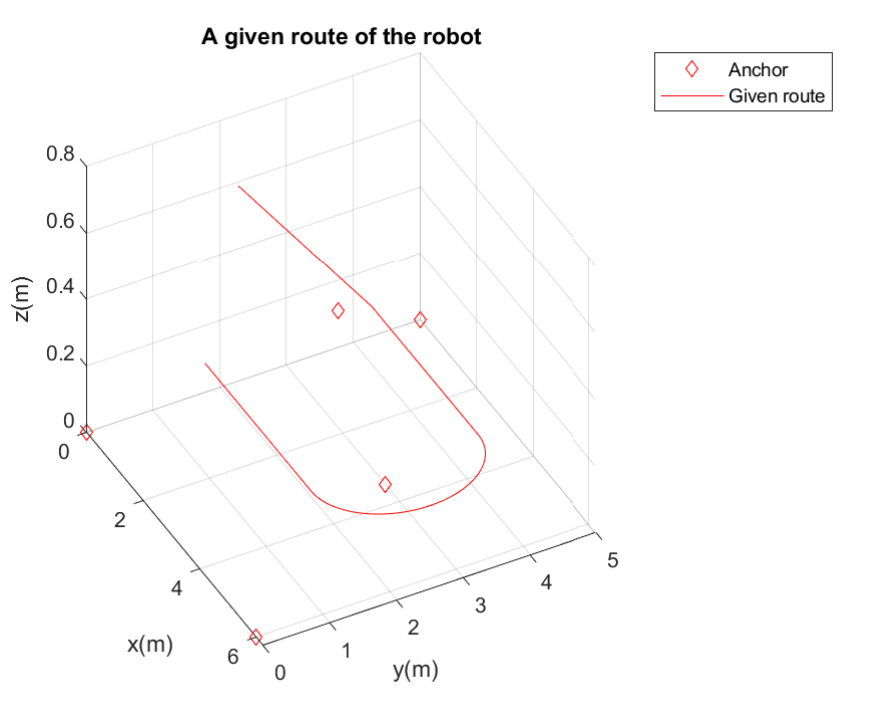}
\caption{A given route to simulate the robot moves in a real 3D environment.}
\label{given_route}
\end{figure}

To show the over-smoothness of some regularization solutions (such as TSVD) and show the performance of the HR method after the bias correction, we perform another simulation based on Algorithm \ref{alg:sliding_window_bias}. The results of different methods are shown in Fig. \ref{new-TR_Ri-optimal_k1_unbias-j200_swin_route} and Table \ref{tab:route_unbias_hr_swin_results}. In this experiment, the bias of the HR solution is corrected by the mean of the difference between the HR solution and the LS solution based on a sliding window.

\begin{figure*}[!htbp]
\centering
\subfloat[]{\includegraphics[width=3.5in]{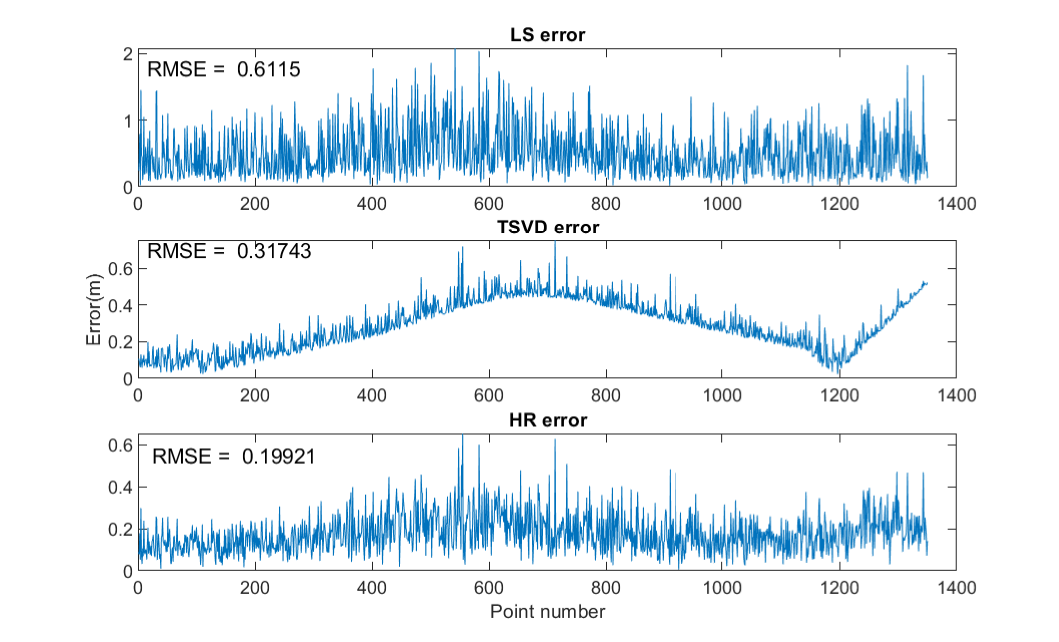}
\label{fig_Er_Ri_optimal_k1_unbias_swin_route}}
\hfil
\subfloat[]{\includegraphics[width=3.5in]{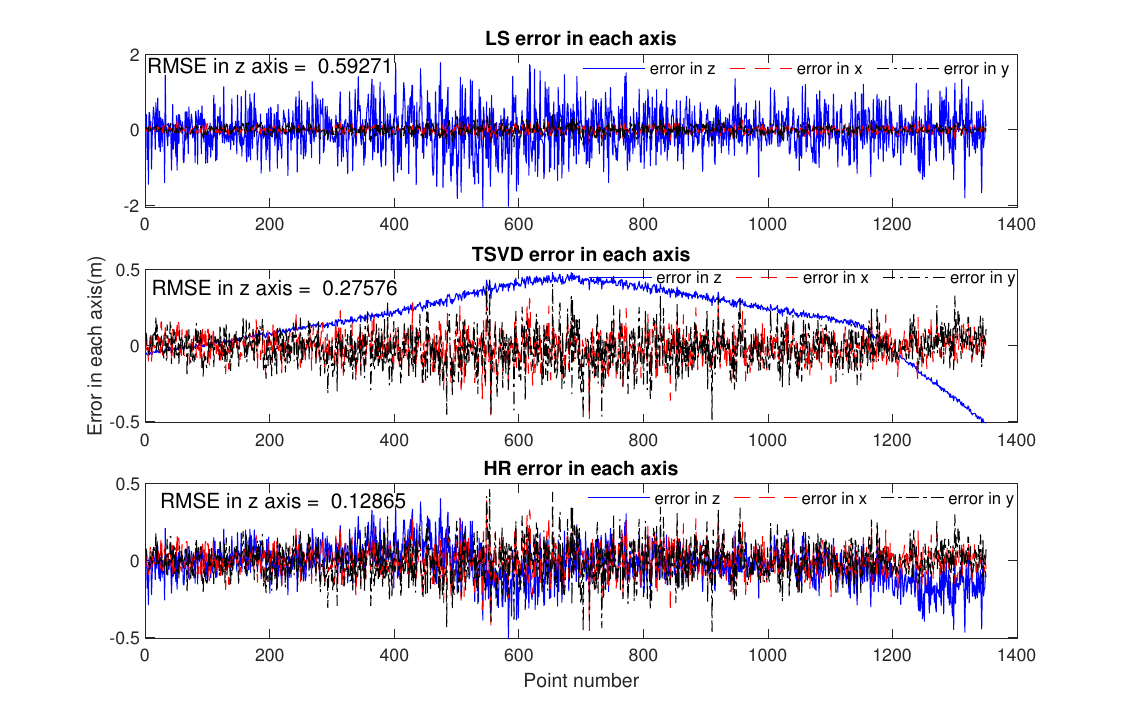}
\label{fig_Er_Ri_optimal_k1_unbias_swin_route_zErr}}
\caption{The performance of the high-order method with $R$ only changing the smallest eigenvalue of $A^TA$ and $\mu^2=\lambda_{n-1}(s=n-1,k=1)$, the TSVD method and LS method. The bias of the HR solution is corrected in real-time based on Algorithm \ref{alg:sliding_window_bias} by using the latest $l$ measurements with a sliding window. The TSVD method truncates the smallest singular value with respect to the z component. In this simulation, the robot moves along a given route. (a) Total RMSE. (b) RMSE in different dimensions.}
\label{new-TR_Ri-optimal_k1_unbias-j200_swin_route}
\end{figure*}

\begin{table}[!t]
\caption{Results of different methods and the bias-corrected HR solution based on Algorithm \ref{alg:sliding_window_bias}(unit: meter) \label{tab:route_unbias_hr_swin_results}}
\centering
\begin{tabular}{c|c|c|c|c}
\hline
Method & RMSE  & RMSE$_x$ & RMSE$_y$ & RMSE$_z$ \\
\hline
LS & 0.61150 & 0.09077 & 0.11994 & 0.59271 \\
\hline
TSVD & 0.31743 & 0.09905 & 0.12209 & 0.27576 \\
\hline
HR & \textbf{0.19921} & 0.09525 & 0.11857 & \textbf{0.12865} \\
\hline
\end{tabular}
\end{table}

From Fig. \ref{new-TR_Ri-optimal_k1_unbias-j200_swin_route} and the numerical result (Table \ref{tab:route_unbias_hr_swin_results}), the RMSE of the HR solution in $z$ dimension is almost the same with the RMSE in $y$ and $x$ dimension, which means that, for such an ill-conditioned robot localization problem, the HR method gives a numerically stable solution just as it does for the well-conditioned problems. In comparison, even though the result of the TSVD performs better than the LS method, the over-smoothness of the TSVD is severe, which is found on the error in $z$ dimension from Fig. \ref{new-TR_Ri-optimal_k1_unbias-j200_swin_route}. This result again indicates the excellent trade-off capability of the proposed high-order regularization method for the estimation bias, over-smoothness and numerical stability. The results in Table \ref{tab:route_unbias_hr_swin_results} reveal that the proposed HR method markedly improves localization accuracy, as evidenced by the total RMSE and the RMSE in the $z$ dimension, which shows enhancements of $67.4\%$ and $78.3\%$, respectively. These results are obtained under stringent conditions where the anchor heights are highly limited, with a maximum height differential of 0.5 meters in the simulations.

\subsection{Experiments}
In this section, we provide extensive experimental validations in the real-world environment to evaluate the performance of the proposed method. Our experimental platform is shown in Fig. \ref{Jackal_Robot}. Four anchors are used in these experiments to locate the robot by a tag carried on this robot. The coordinates of the anchors are measured manually using a rangefinder. The robot is designed to move along given trajectories with unknown dynamics. The position of the robot is calculated only by the current distance measurements between the robot and the four anchors, which means that no assumptions of the robot dynamics and the trajectory are required to calculate the position of the robot. An Ouster LiDAR is used to provide the reference positions of the robot, which are used to evaluate the performance of the proposed method. To reduce the influence of some unexpected factors, such as environment disturbance and UWB sensor antenna orientation, on distance measurements, a filter parameterized by the environment is applied to filter the distance measurements before calculating the position of the robot. To estimate the performance of the proposed method, the Umeyama method\cite{Umeyama1991Least} is used in these experiments to align two point patterns into the same frame referring to some evaluation tools\cite{grupp2017evo}.

\begin{figure}[!t]
\centering
\includegraphics[width=2.5in]{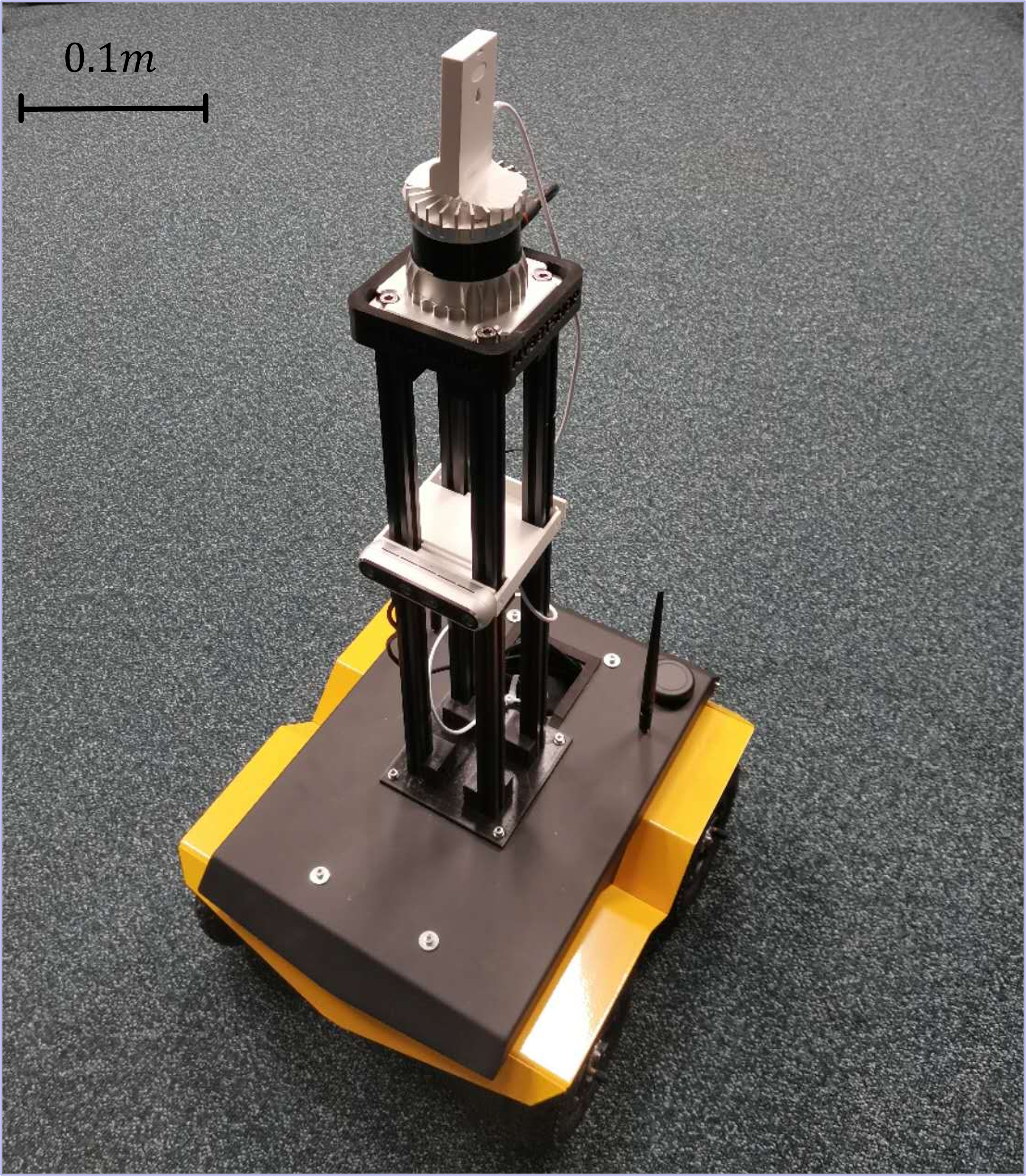}
\caption{The platform in our experiment. The white box on top is the UWB sensor that measures the distances between the robot and the anchors, and the LiDAR under the white box is used to provide the reference positions of the robot.}
\label{Jackal_Robot}
\end{figure}

\begin{figure}[!ht]
\centering
\includegraphics[width=3.5in]{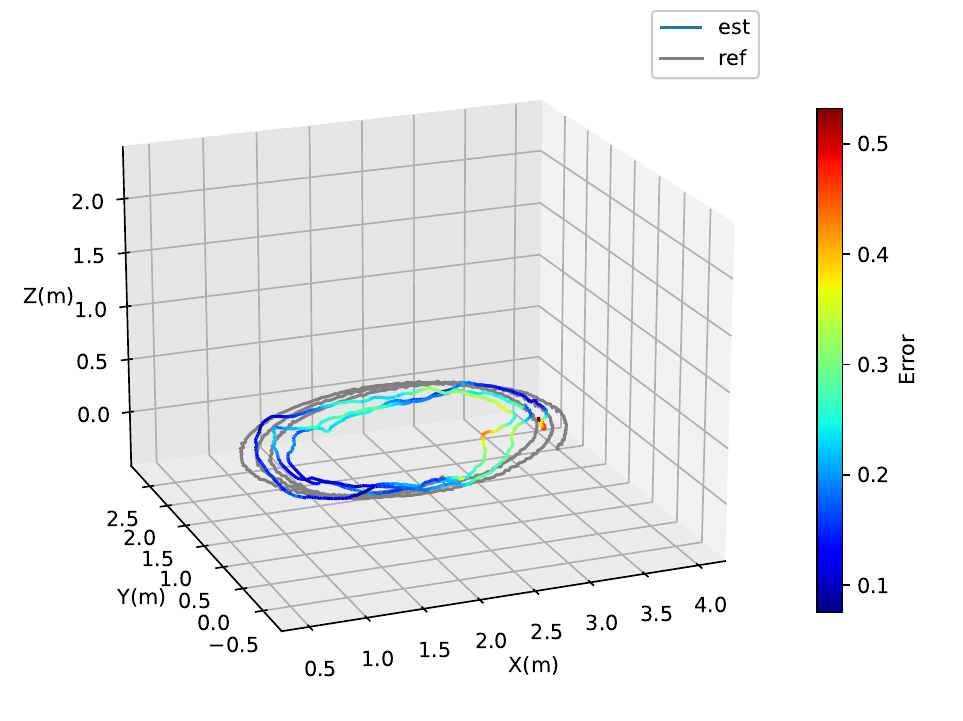}
\caption{The estimation result (est) and the reference path (ref) of the robot moving in a real 3D indoor environment. The reference path is calculated using the LiDAR measurements to compare the estimation of the proposed HR method.}
\label{real_route_result}
\end{figure}

\begin{figure}[!ht]
\centering
\includegraphics[width=3.5in]{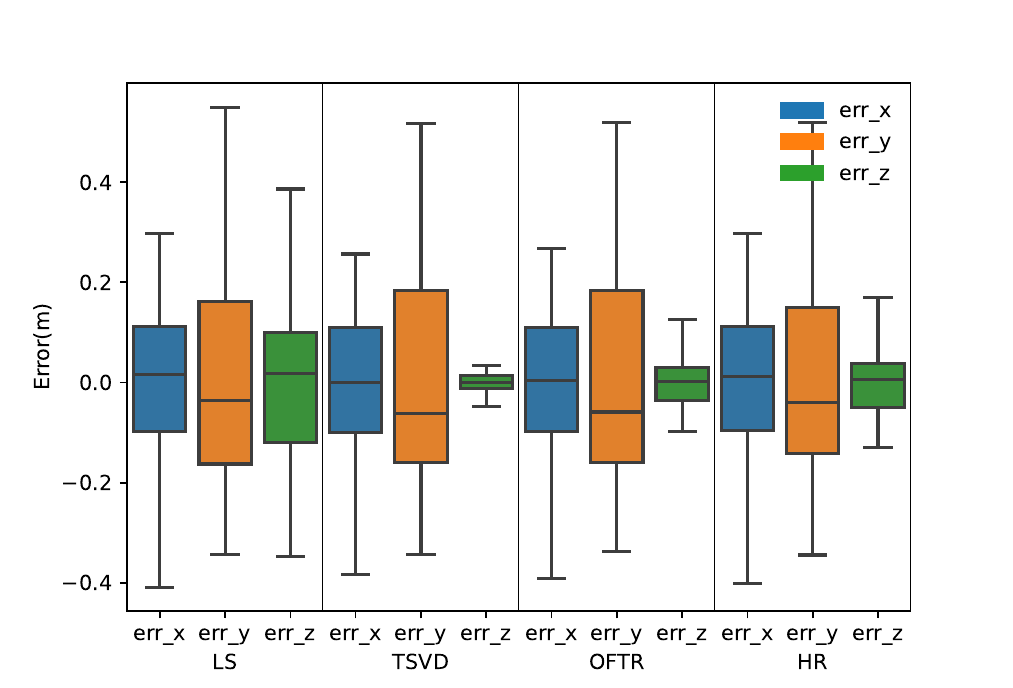}
\caption{The experiment results. The result is given by the high-order method with $R$ only changing the smallest eigenvalue of $A^TA$ and $\mu^2=(\lambda_n^2+\lambda_n \lambda_1)^{1/2} (s=n-1, k=1)$. The result of the OFTR solution is provided by the FTR method whose regularization parameter $\mu^2=(2\lambda_1 /\lambda_n)^{1/2}$ is given by the \emph{a priori} criterion proposed in this work for $s=n-1,k=0$, which only changes the smallest eigenvalue of the matrix $A^TA$. In the figure, the error in the z-direction of the proposed method is more similar to the errors in the $x$ and $y$ directions, compared with the other regularization methods.
}
\label{UWB_path_error_in_axces_box_024_all}
\end{figure}

\begin{figure}[!ht]
\centering
\includegraphics[width=3.5in]{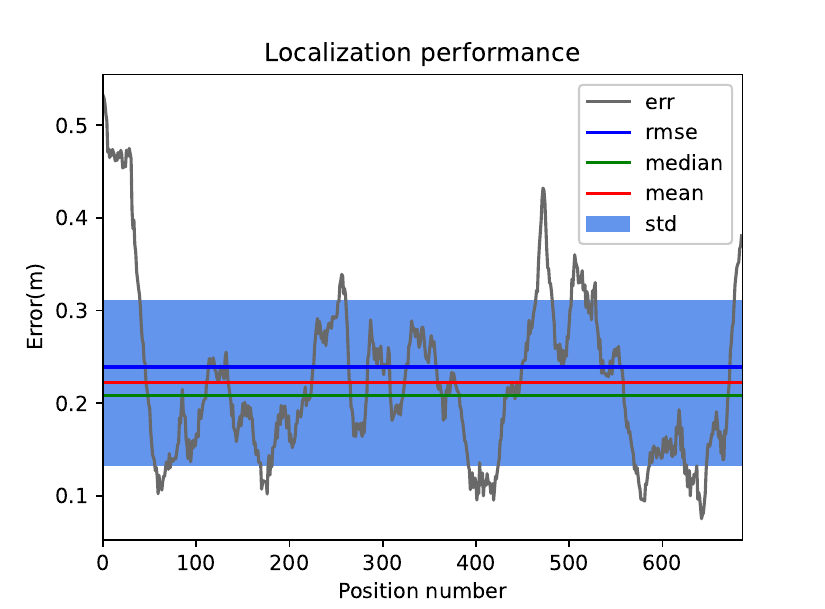}
\caption{The detailed performance of the proposed high-order method in a real 3D environment.}
\label{UWB_path_error_results_03021734_1120_201_HR}
\end{figure}

\begin{table}[!t]
\caption{Experimental results of different methods (unit: meter) \label{tab:Experimental_results}}
\centering
\begin{tabular}{c|c|c|c|c}
\hline
Method & RMSE  & Mean & Max & Min \\
\hline
LS & 0.28242 & 0.26636 & 0.60264 & 0.08551 \\
\hline
TSVD & 0.25220 & 0.23403 & 0.51938 & 0.06263 \\
\hline
OFTR & 0.25505 & 0.23779 & 0.51938 & 0.06203 \\
\hline
HR & \textbf{0.23915} & \textbf{0.22219} & 0.53203 & 0.07529 \\
\hline
\end{tabular}
\end{table}

\begin{table}[!t]
\caption{Experimental results of different methods with a different height of one anchor (unit: meter)\label{tab:Experimental_results_different_hight}}
\centering
\begin{tabular}{c|c|c|c|c}
\hline
Method & RMSE  & Mean & Max & Min \\
\hline
LS & 0.26989 & 0.25256 & 0.56059 & 0.06554 \\
\hline
TSVD & 0.24516 & 0.23229 & 0.48323 & 0.08341 \\
\hline
OFTR & 0.24557 & 0.23317 & 0.48225 & 0.08240 \\
\hline
HR & \textbf{0.21222} & \textbf{0.19865} & 0.45421 & 0.06630 \\
\hline
\end{tabular}
\end{table}

In the first two experiments, the robot moves on a slope, and the height of one anchor is changed in different experiments to reach different degrees of ill-conditioned situations. Three anchors are located at (0, 0, 2.29), (5.30, 4.12, 1.20), and (-0.56, 2.01, 0.30). The fourth anchor is either at (6.00, 0, 1.20) or higher at (6.00, 0, 1.80). The result and the reference of the localization experiment are shown in Fig. \ref{real_route_result}. The experimental results in Fig. \ref{UWB_path_error_in_axces_box_024_all} indicate that the proposed HR method improves the localization performance in the z-dimension, which is the same as the simulations. Comparing the results of TSVD, OFTR (one special case of HR with $k=0$ and an optimal $R$), and HR ($k=1$), we find that the error of TSVD or OFTR in z-dimension is smaller than HR, which is a severely over-smoothing situation, as the error of these methods in z-dimension is even much smaller than errors in x-dimension and y-dimension for the same methods. This situation occurs since our robot moves nearly in a plane. In some similar cases, we even obtain a smaller error in the z-dimension by assuming the change in z-dimension for the robot is 0, and we get the same small error result by using a larger regularization parameter $\mu$ for the regularization matrix $R$. However, the assumption does not hold for many applications (such as for drones) and is not required in the proposed high-order regularization. From Fig. \ref{UWB_path_error_in_axces_box_024_all}, the experimental results of the HR method in different dimensions are almost the same, which means that our high-order method overcomes the over-smoothing problem and ensures an improved localization performance. We also perform another experiment by changing the height of one anchor to show the behavior of the proposed HR method under different degrees of ill-conditioned situations. The detailed performance of the proposed high-order method is shown in Fig. \ref{UWB_path_error_results_03021734_1120_201_HR}, and all numerical results are given in Table \ref{tab:Experimental_results} and Table \ref{tab:Experimental_results_different_hight}. From the experimental results in Table \ref{tab:Experimental_results_different_hight}, the proposed HR method demonstrates a performance improvement in RMSE and mean error by over $20\%$ with the best enhancement of $21.3\%$, even when the anchor heights are not significantly limited, as in the experiments, the maximum height variation of anchors is 1.99 meters.

We also perform some tests on different trajectories. In these experiments, the robot moves horizontally in a $7m\times7m$ room. The UWB anchors are located at (1.50, 2.00, 0.69), (1.50, 0.17, 2.00), (7.17, 2.00, 1.50), and (7.20, 0.50, 3.00), where the anchors exhibit very close spacing along the y and z axes compared to the x axis.
In the `shape 1' experiment, the robot drove in a straight line, almost orthogonal to the x direction of the anchor layout. In the `shape 8' experiment, the robot moves along a figure-8 trajectory. For the random experiment, the robot drove randomly in the room. The results are given in Table \ref{tab:Experimental_results_differnt_routes}. The results of these experiments reaffirm the superiority of the proposed high-order regularization method over the other methods in terms of RMSE and mean error in such a severely restricted environment.
\begin{table}[!t]
\caption{Results of different routes (unit: meter) \label{tab:Experimental_results_differnt_routes}}
\centering
\hspace{-0.51cm}
\begin{tabular}{c|cc|cc|cc}
\hline
\multirow{2}{*}{Routes} & \multicolumn{2}{c|}{shape 1}     & \multicolumn{2}{c|}{shape 8 }     & \multicolumn{2}{c}{random}     \\ \cline{2-7} 
                   & \multicolumn{1}{c|}{RMSE} & Mean & \multicolumn{1}{c|}{RMSE} & Mean & \multicolumn{1}{c|}{RMSE} & Mean \\ \hline
LS     & \multicolumn{1}{c|}{1.31241} &1.03008 & \multicolumn{1}{c|}{1.25969} &0.99082 & \multicolumn{1}{c|}{1.43326} &1.04008 \\ \hline
TSVD   & \multicolumn{1}{c|}{0.34737} &0.28921 & \multicolumn{1}{c|}{0.30466} &0.24939 & \multicolumn{1}{c|}{0.33180} &0.28154 \\ \hline
OFTR   & \multicolumn{1}{c|}{0.34746} &0.28936 & \multicolumn{1}{c|}{0.30474} &0.24953 & \multicolumn{1}{c|}{0.33191} &0.28183 \\ \hline
HR     & \multicolumn{1}{c|}{\textbf{0.25541}} &\textbf{0.21260} & \multicolumn{1}{c|}{\textbf{0.25968}} &\textbf{0.22579} & \multicolumn{1}{c|}{\textbf{0.25475}} &\textbf{0.23577} \\ \hline
\end{tabular}
\end{table}

\section{Conclusion and discussion}\label{conclusions}
In this work, we propose a high-order regularization method to solve the restricted robot localization problem when the associated matrix is ill-conditioned. From the simulation, in a severely restricted scenario, the proposed HR method achieves a significant enhancement of $78.3\%$ in $z$ dimension (the restricted dimension), while in a scenario without severe restrictions, the improvement of the proposed method also reaches more than $20\%$ from the experimental results. We show that the proposed method is superior to many Tikhonov regularization-based methods in approximating the inverse problem. It is shown that the Tikhonov regularization is a special case of the proposed method. The explanation of the over-smoothing of the Tikhonov regularization solution is also given. The proposed method is based on matrix series expansion and eigenvalue decomposition, which means that there are always some explainable and reasonable techniques to improve the estimation performance of the proposed method as eigenvalue decomposition is always applicable to the new matrix.

To find an explainable and reasonable regularization matrix, we simplify its selection by adopting a special form, which only changes some small eigenvalues of the matrix $A^T A$. We also propose one \emph{a priori} criterion to improve the numerical stability of the problem, which is regarded as replacing the ill-conditioned problem with a well-conditioned approximate problem. The optimal regularization matrix is calculated based on the proposed criterion, which considers the balance between modifying small eigenvalues and keeping the information of some dimensions with respect to these small eigenvalues. However, if the information on some dimensions is not needed or is obtained by some other technologies, the proposed criterion does not guarantee the best choices for the regularization matrix. The optimal regularization matrix also provides the potential to verify if the given $k$ or $s$ is appropriate. Two methods to improve the performance of the $k$th-order regularization and calculate the bias of the regularization solution are also proposed, which helps achieve a possible unbiased solution similar to the LS solution.
Note that the computation of $R$, in the case of the optimal regularization matrix according to \eqref{Relaxation_R}, involves the computation of eigenvalues of matrices, which causes computational tractability when the dimension of the matrix grows. In such scenarios, our proposed approach for selecting $R$ applies only to smaller scale problems. Please also note that one does not always have to pick the optimal $R$; as long as the spectral radius inequality below equation \eqref{HR_soluntion} is satisfied, to improve computational efficiency, one can choose $R$ such that $A^TA +R$ becomes diagonal, block diagonal or triangular.

For future work, some possible improvements of the high-order regularization method are determining the adjustment parameter by calculating the residual directly or calculating the bias with some expressions with respect to the new matrix and the regularization matrix. Extending such a high-order regularization to nonlinear model cases is the next step of this work. Applying the high-order regularization to some other ill-conditioned problems is also a promising topic. In particular, applying the proposed explainable and bounded high-order regularization to machine learning, which uses regularization methods widely to achieve generalization abilities, is expected to drive a more explainable machine learning theory.

\section*{Acknowledgments}
The authors would like to thank Bayu Jayawardhana for the UWB sensors and Simon Busman for his help in the experiments.

{\appendices

\section{The problem that the HR solves}
Similar to TR and LS, one can derive the problem corresponding to the proposed high-order regularization method for $k>0$. Consider that the high-order solution is achieved by setting the first derivative of some cost function to zero, and one can assume that the first derivative of the problem $f_x$ has

\begin{equation} \label{HR_problem_derivative}
\begin{aligned}
& 0=\left(\sum_{i=0}^k\left(R\left(A^T A+R\right)^{-1}\right)^i\right)^{-1} \left(A^T A+R\right)^{-1}x  -A^T b,
\end{aligned}
\end{equation}
which is rewritten as
\begin{equation}
\begin{aligned}
& 0=A^T A x-A^T b+R x-\\ &\left(\sum_{i=0}^k\left(R\left(A^T A+R\right)^{-1}\right)^i\right)^{-1} R \sum_{i=0}^{k-1}\left(R\left(A^T A+R\right)^{-1}\right)^i x, \\
& :=\frac{1}{2}\frac{\partial}{\partial x}\left[(A x-b)^{\mathrm{T}}(A x-b)\right]+\frac{1}{2}\frac{\partial}{\partial x}\left[x^T B(R)^T B(R) x\right],\\
\end{aligned}
\end{equation}
where 
\begin{equation}
\begin{aligned}
& B(R)^T B(R)=R-\\
&\left(\sum_{i=0}^k\left(R\left(A^T A+R\right)^{-1}\right)^{i}\right)^{-1} R\sum_{i=0}^{k-1}\left(R\left(A^T A+R\right)^{-1}\right)^i.
\end{aligned}
\end{equation}
Hence, the problem solved by the HR solution is
\begin{equation}
\begin{aligned}
\min \{\|A x-b\|^2+\|B(R) x\|^2\}.
\end{aligned}
\end{equation}
It follows that 

\begin{equation}
\lim _{k \rightarrow \infty} B(R) \rightarrow O_n,
\end{equation}
which implies that the problem is an approximate problem to the original problem. Note that for the HR solution, there are many equivalent problems due to different selections of the first derivative expression. This result shows that the HR solution is regarded as an approximation solution to the original problem.

\section{Proof of a similar approximate expression}
Consider the Cholesky factorization of $A^T A$ is $S^T S$, then one has
\begin{equation}
\left(A^T A+R\right)=\left(S^T S+R\right).
\end{equation}
If the spectral radius condition $\rho\left(\left(S^T\right)^{-1} RS^{-1}\right)<1$ is satisfied, then a special and similar approximate expression of the inverse of $A^T A$ is found as follows
\begin{equation}
\begin{aligned}
& \left(A^T A+R\right)^{-1} \\
& =S^{-1}\left[I+\left(S^T\right)^{-1} RS^{-1}\right]^{-1}\left(S^T\right)^{-1} \\
& = \left(A^T A\right)^{-1} \sum_{i=0}^{\infty}\left(-R\left(A^T A\right)^{-1}\right)^i,
\end{aligned}
\end{equation}
and the spectral radius becomes $\rho\left(-R\left(A^T A\right)^{-1}\right)<1$. Then one has
\begin{equation} \label{AA_AAR}
\begin{aligned}
\left(A^T A\right)^{-1}=&\left(A^T A+R\right)^{-1}+\left(A^T A\right)^{-1} R\left(A^T A\right)^{-1}\\
&-\left(A^T A\right)^{-1} \sum_{i=2}^{\infty}\left(-R\left(A^T A\right)^{-1}\right)^i,
\end{aligned}
\end{equation}
and
\begin{equation}\label{RAA_RAAR}
\begin{aligned}
& -R\left(A^T A\right)^{-1}+R\left(A^T A+R\right)^{-1} \\
&=-R\left(A^T A\right)^{-1} \left(R\left(A^T A\right)^{-1} -\sum_{i=2}^{\infty}\left(-R\left(A^T A\right)^{-1}\right)^i\right).
\end{aligned}
\end{equation}
When $k=1$, the bias between LS solution \eqref{Ls_soluntion} and the high-order solution \eqref{HR_soluntion_k1} is
\begin{equation}
\begin{aligned}
\hat{x}_{ls}-\hat{x}_{hr}=&\left(A^T A\right)^{-1}A^T b-\left(A^T A+R\right)^{-1}A^T b\\
&-\left(A^T A+R\right)^{-1} R\left(A^T A+R\right)^{-1} A^T b,
\end{aligned}
\end{equation}
Note that, one has
\begin{equation}
\begin{aligned}
& \left(A^T A\right)^{-1}-\left(A^T A+R\right)^{-1}-\left(A^T A+R\right)^{-1} R\left(A^T A+R\right)^{-1}\\
& =\left(\left(A^T A\right)^{-1}-\left(A^T A+R\right)^{-1}\right) R\left(A^T A\right)^{-1}\\
& \quad +\left(A^T A+R\right)^{-1}\left(R\left(A^T A\right)^{-1}-R\left(A^T A+R\right)^{-1}\right) \\
& \quad -\left(A^T A\right)^{-1} \sum_{i=2}^{\infty}\left(-R\left(A^T A\right)^{-1}\right)^i,
\end{aligned}
\end{equation}
and then according to \eqref{AA_AAR} and \eqref{RAA_RAAR}, one has the approximation error of the matrix inverse
\begin{equation}
\begin{aligned}
& err_{ap}((A^TA)^{-1})=\\
& \left(A^T A\right)^{-1}-\left(A^T A+R\right)^{-1}-\left(A^T A+R\right)^{-1} R\left(A^T A+R\right)^{-1}\\
& =\left(A^T A+R\right)^{-1}\left(\left(I+R\left(A^T A\right)^{-1}\right)^{-1}-I+R\left(A^T A\right)^{-1}\right).
\end{aligned}
\end{equation}

Hence, the bias is described as
\begin{equation}
\begin{aligned}
& \hat{x}_{ls}-\hat{x}_{hr} \\
&=\left(A^T A+R\right)^{-1}\\
& \quad \times \left(\left(I+R\left(A^T A\right)^{-1}\right)^{-1}-I+R\left(A^T A\right)^{-1}\right) A^T b.
\end{aligned}
\end{equation}
This result shows that for $k=1$, the bias between the LS solution and the high-order solution can be described in different forms under some different approximate expressions of the inverse of $A^TA$. However, some approximate expressions do not guarantee a stable solution to the ill-conditioned problem, as in these approximate expressions, the inverse $(A^TA)^{-1}$ still needs to be calculated or is only suitable for some $k$.

\section{Comparison of error bounds of the HR solution and the TR solution}
Note that $\rho\left(R\left(A^T A+R\right)^{-1}\right)<1$, and one has the approximate residual \eqref{Approximation_residual_F}. For $k>0$, one has the error bounds of the HR solution $\left\|\hat{x}_{hr}-\hat{x}_{ls}\right\|$ and the TR solution
\begin{equation}
\begin{aligned}
&\left\|\hat{x}_{tr}-\hat{x}_{ls}\right\| \\
& = \left\|\left(A^T A+R\right)^{-1} \left(\sum_{i=1}^{k}\left(R\left(A^T A+R\right)^{-1}\right)^i\right)A^T b  \right.\\
& \quad\quad \left. + \hat{x}_{hr}-\hat{x}_{ls} \right\| \\
& \geq \left\|\hat{x}_{hr}-\hat{x}_{ls}\right\|.
\end{aligned}
\end{equation}
This result is also obtained from \eqref{Bias_LS_HR} since TR is a special case of HR.

\section{Proof of convexity of the loss function}
The following proof shows that the loss function is a convex function, for $s=n-1, k \in \mathbb{N}$. For $s=n-1, k=0$ and $k=1$, we have shown that the loss function in \eqref{Loss_function_relaxation} is a convex function. Note that if $\mu^2=\lambda_n$, $R=O_n$ . Assume $s=n-1, k \geq 2$ and $\mu^2 > \lambda_n$, one has
\begin{equation}
\begin{aligned}
g(\mu):=\frac{\mu}{\lambda_n}\left(\frac{\mu-\lambda_n}{\mu}\right)^{k+1}=\frac{1}{\lambda_n} \mu\left(1-\frac{\lambda_n}{\mu}\right)^{k+1},
\end{aligned}
\end{equation}

\begin{equation}
\begin{aligned}
&g^{\prime}(\mu) =\left(\frac{\mu+k \lambda_n}{\left(\mu-\lambda_n\right) \mu}\right) g(\mu),
\end{aligned}
\end{equation}

\begin{equation}
\begin{aligned}
& g ^{\prime \prime}(\mu) =\left(\frac{\left(k+k^2\right) \lambda_n^2}{\left(\left(\mu-\lambda_n\right) \mu\right)^2}\right) g(\mu) \\ &=\frac{\lambda_n\left(k+k^2\right)}{\mu^2\left(\mu-\lambda_n\right)}\left(\frac{\mu-\lambda_n}{\mu}\right)^k>0.
\end{aligned}
\end{equation}
Then the second derivative of the loss function in \eqref{Loss_function_relaxation} is expressed as
\begin{equation}
\begin{aligned}
& f^{\prime \prime}(\mu^2) = g^{\prime \prime}(\mu^2)+\frac{2 \lambda_1}{\mu^6}\\
& =\frac{\lambda_n\left(k+k^2\right)}{\mu^4\left(\mu^2-\lambda_n\right)}\left(\frac{\mu^2-\lambda_n}{\mu^2}\right)^k+\frac{2 \lambda_1}{\mu^6}>0.
\end{aligned}
\end{equation}
Hence, for $s=n-1, k \in \mathbb{N}$, the loss function is a convex function with respect to $\mu^2$. Therefore, there must be a unique minimum of the loss function, and an optimal regularization matrix $R$ is found at the minimum of the loss function.

\section{The covariance calculation of HR}
Considering that $b_0=b+\epsilon$ is the observation with noise $\epsilon$, one has 
\begin{equation}
\begin{aligned}
&\hat{x}_{hr} - x = \Theta A^T \epsilon,
\end{aligned}
\end{equation}
where $\Theta:= \left(A^T A+R\right)^{-1} \sum_{i=0}^k\left(R\left(A^T A+R\right)^{-1}\right)^i$ is the matrix inverse approximation of $(A^T A)^{-1}$. Hence, similar to LS, one can calculate a covariance by
\begin{equation}
\begin{aligned}
&C_{\hat{x}_{hr}}=\text{E}[(\hat{x}_{hr} - x)(\hat{x}_{hr} - x)^T] = \Theta A^T \text{E}[\epsilon \epsilon ^T] A \Theta ^T ,
\end{aligned}
\end{equation}
where $\text{E}[\cdot]$ is the expectation operator. For $\text{E}[\epsilon \epsilon ^T]=\sigma^2 I$, the covariance matrix $C_{\hat{x}_{hr}}= \sigma^2 \Theta A^T A \Theta ^T$, and one has 
\begin{equation}\lim _{k \rightarrow \infty} C_{\hat{x}_{hr,k}}= \sigma^2 (A^T A)^{-1},
\end{equation}
which is the covariance of the LS estimator.

}



\begin{IEEEbiography}[{\includegraphics[width=1in,height=1.25in,clip,keepaspectratio]{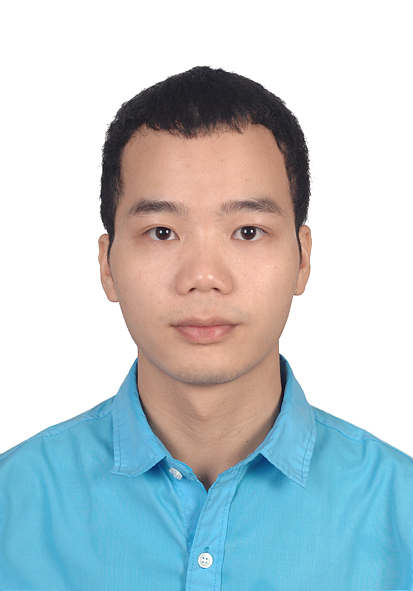}}]{Xinghua Liu}
(Student Member, IEEE) received the bachelor’s degree in Measurement, Control Technology and Instrument from Hefei University of Technology, Anhui, China, in 2017, and the master’s degree in Instrument Science and Technology from Southeast University, Jiangsu, China, in 2020. He is currently working toward the Ph.D. degree with the University of Groningen, Groningen, the Netherlands.

He received the Outstanding Graduate from Hefei University of Technology in 2017. He was a research intern in 2018 with Nanjing Fujitsu Nanda Software Technology Co., Ltd., China. From 2020 to 2022, he was an R$\&$D engineer with Huawei Technologies Co. Ltd., China, contributing to AI computing networks. His research interests include robotics, artificial intelligence, applied physics, multi-sensor fusion, and learning-based control.
\end{IEEEbiography}

\vspace{11pt}

\begin{IEEEbiography}[{\includegraphics[width=1in,height=1.25in,clip,keepaspectratio]{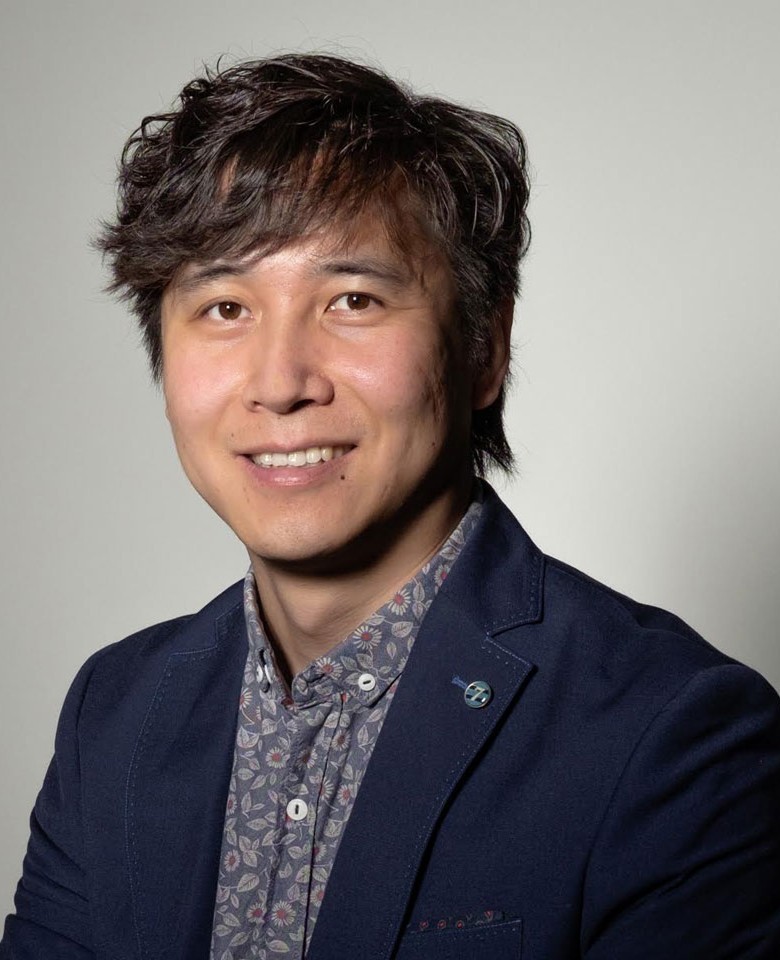}}]{Ming Cao}
(Fellow, IEEE) received the bachelor’s and master’s degrees from Tsinghua University, Beijing, China, in 1999 and 2002, respectively, and the Ph.D. degree from Yale University, New Haven, CT, USA, in 2007, all in electrical engineering.

Since 2016, he has been a professor of Networks and Robotics with the Engineering and Technology Institute, University of Groningen, Groningen, The Netherlands, where he started as an Assistant Professor in 2008. From 2007 to 2008, he was a Research Associate with Princeton University, Princeton, NJ, USA. He was a Research Intern in 2006 with the IBM T. J. Watson Research Center, USA. 

He is an IEEE Fellow. He is the 2017 and inaugural recipient of the Manfred Thoma medal from the International Federation of Automatic Control (IFAC) and the 2016 recipient of the European Control Award sponsored by the European Control Association (EUCA). He is a Senior Editor for Systems and Control Letters, and is or has been an Associate Editor for IEEE Transactions on Automatic Control, IEEE Transaction on Control of Network Systems, IEEE Transactions on Neural Networks and Learning Systems, IEEE Transactions on Circuits and Systems, IEEE Robotics $\&$ Automation Magazine, and IEEE Circuits and Systems Magazine. He is a member of the IFAC Council and a vice chair of the IFAC Technical Committee on Large-Scale Complex Systems. His research interests include autonomous robots and multi-agent systems, complex networks and decision-making processes.
\end{IEEEbiography}

\vfill

\end{document}